\newif\ifpreprint
\newcommand{\github}{https://github.com/ytsmiling/lmt}
\newcommand{\Sec}{Sec.}
\newcommand{\appendixing}[1]{Appendix{#1}}
\newcommand{\appendixing}[1]{Appendix{#1}}
\newtheorem{lemma}{Lemma}
\newtheorem{theorem}{Theorem}
\newtheorem{prop}{Proposition}
\newtheorem{corollary}{Corollary}
\DeclareMathOperator*{\argmax}{\mathop{\rm argmax}}
\DeclarePairedDelimiterX{\KL}[2]{\text{KL}[}{]}{%
#1\;\delimsize\|\;#2%
}
\newcommand{\rmnum}[1]{\romannumeral #1}
\begin{document}
    
    \title{Lipschitz-Margin Training: Scalable Certification of\\
    Perturbation Invariance for Deep Neural Networks}

        \author{
            Yusuke Tsuzuku\\
            The University of Tokyo\\
            RIKEN\\
            \texttt{tsuzuku@ms.k.u-tokyo.ac.jp}\\
            \And
            Issei Sato\\
            The University of Tokyo\\
            RIKEN\\
            \texttt{sato@k.u-tokyo.ac.jp}\\
            \And
            Masashi Sugiyama\\
            RIKEN\\
            The University of Tokyo\\
            \texttt{sugi@k.u-tokyo.ac.jp}
        }

    \maketitle

        \begin{abstract}

        High sensitivity of neural networks against malicious perturbations on inputs causes security concerns.
        To take a steady step towards robust classifiers, we aim to create neural network models provably defended from perturbations.
        Prior certification work requires strong assumptions on network structures and massive computational costs, and thus the range of their applications was limited.
        From the relationship between the Lipschitz constants and prediction margins, we present a computationally efficient calculation technique to lower-bound the size of adversarial perturbations that can deceive networks, and that is widely applicable to various complicated networks.
        Moreover, we propose an efficient training procedure that robustifies networks and significantly improves the provably guarded areas around data points.
        In experimental evaluations, our method showed its ability to provide a non-trivial guarantee and enhance robustness for even large networks.

    \end{abstract}

  \section{Introduction}
  \label{sec:introduction}

  { 

    Deep neural networks are highly vulnerable against intentionally created small perturbations on inputs \citep{Intriguing}, called adversarial perturbations, which cause serious security concerns in applications such as self-driving cars.
    Adversarial perturbations in object recognition systems have been intensively studied ~\citep{Intriguing,FGSM,CW}, and we mainly target the object recognition systems.
  }

    One approach to defend from adversarial perturbations is to mask gradients.
    Defensive distillation \citep{DefensiveDistillation}, which distills networks into themselves, is one of the most prominent methods.
    However, \citet{CW} showed that we can create adversarial perturbations that deceive networks trained with defensive distillation.
    Input transformations and detections \citep{FeatureSqueezing, InputTransform} are some other defense strategies, although we can bypass them \citep{Bypassed}.
    Adversarial training \citep{FGSM, AtScale, Madry}, which injects adversarially perturbed data into training data, is a promising approach.
    However, there is a risk of overfitting to attacks \citep{AtScale, Ensemble}.
    Many other heuristics have been developed to make neural networks insensitive against small perturbations on inputs.
    However, recent work has repeatedly succeeded to create adversarial perturbations for networks protected with heuristics in the literature \citep{Survey2}.
    For instance, \citet{Obfuscated} reported that many ICLR 2018 defense papers did not adequately protect networks soon after the announcement of their acceptance.
    This indicates that even protected networks can be unexpectedly vulnerable, which is a crucial problem for this specific line of research because the primary concern of these studies is security threats.

  { 

    The literature indicates the difficulty of defense evaluations.
    Thus, our goal is to ensure the lower bounds on the size of adversarial perturbations that can deceive networks for each input.
    Many existing approaches, which we cover in Sec.~\ref{sec:related_work}, are applicable only for special-structured small networks.
    On the other hand, common networks used in evaluations of defense methods are wide, which makes prior methods computationally intractable and complicated, which makes some prior methods inapplicable.
    This work tackled this problem, and we provide a widely applicable, yet, highly scalable method that ensures large guarded areas for a wide range of network structures.

  }

  {
    The existence of adversarial perturbations indicates that the slope of the loss landscape around data points is large,
    and we aim to bound the slope.
    An intuitive way to measure the slope is to calculate the size of the gradient of a loss with respect to an input.
    However, it is known to provide a false sense of security \citep{Ensemble, CW, Obfuscated}.
    Thus, we require upper-bounds of the gradients.
    The next candidate is to calculate a local Lipschitz constant, that is the maximum size of the gradients around each data point.
    Even though this can provide certification, calculating the Lipschitz constant is computationally hard.
    We can obtain it in only small networks or get its approximation, which cannot provide certification \citep{FormalGuarantee, Extreme, ExtremeRebuttal}.
    A coarser but available alternative is to calculate the global Lipschitz constant.
   However, prior work could provide only magnitudes of smaller certifications compared to the usual discretization of images even for small networks \citep{Intriguing, LowerBounds}.
    We show that we can overcome such looseness with our improved and unified bounds and a developed training procedure.
    The training procedure is more general and effective than previous approaches~\citep{Parseval, SpectralNorm}.
    We empirically observed that the training procedure also improves robustness against current attack methods.
  }

  \section{Related work}
  \label{sec:related_work}

  {
    In this section, we review prior work to provide certifications for networks.
    One of the popular approaches is restricting discussion to networks using ReLU \citep{ReLU} exclusively as their activation functions and reducing the verification problem to some other well-studied problems.
    \citet{LP} encoded networks to linear programs, \citet{ReluPlex, TowardsProving} reduced the problem to Satisfiability Modulo Theory, and \citet{SDP} encoded networks to semidefinite programs.
    However, these formulations demand prohibitive computational costs and their applications are limited to only small networks.
    As a relatively tractable method, \citet{OuterPolytope} has bounded the influence of $\ell_\infty$-norm bounded perturbations using convex outer-polytopes.
    However, it is still hard to scale this method to deep or wide networks.
    Another approach is assuming smoothness of networks and losses.
    \citet{FormalGuarantee} focused on local Lipschitz constants of neural networks around each input.
    However, the guarantee is provided only for networks with one hidden layer.
    \citet{DistributionalRobustness} proposed a certifiable procedure of adversarial training.
    However, smoothness constants, which their certification requires, are usually unavailable or infinite.
    As a concurrent work, \citet{Reachability} proposed another algorithm to certify robustness with more scalable manner than previous approaches.
    We note that our algorithm is still significantly faster.
  }

    \section{Problem formulation}
    \label{sec:formulation}
    We define the threat model, our defense goal, and basic terminologies.

    \paragraph{Threat model:}
    Let $X$ be a data point from data distribution $D$ and its true label be $t_X \in \{1,\dots,K\}$ where $K$ is the number of classes.
    Attackers create a new data point similar to $X$ which deceives defenders' classifiers.
    In this paper, we consider the  $\ell_2$-norm as a similarity measure between data points because it is one of the most common metrics \citep{DeepFool, CW}.

    Let $c$ be a positive constant and $F$ be a classifier.
    We assume that the output of $F$ is a vector $F(X)$ and the classifier predicts the label with $\argmax_{i\in\{1,\dots,K\}}\{F(X)_i\}$, where $F(X)_i$ denotes the $i$-th element of $F(X)$.
    Now, we define adversarial perturbation $\epsilon_{F, X}$ as follows.
    \begin{equation}
        \epsilon_{F, X}\in\left\{\epsilon\middle|\|\epsilon\|_2 < c\ \land\  t_X \neq \argmax_{i\in\{1,\dots,K\}}\left\{F\left(X + \epsilon\right)_i\right\} \right\}.\nonumber
    \end{equation}
    \paragraph{Defense goal:}
    We define a guarded area for a network $F$ and a data point $X$ as a hypersphere with a radius $c$ that satisfies the following condition:
    \begin{equation}
        \label{certification}
        \forall\epsilon, \left(\|\epsilon\|_2 < c \ \Rightarrow\  t_X = \argmax_{i\in\{1,\dots,K\}}\left\{F\left(X+\epsilon\right)_i\right\}\right).
    \end{equation}
    This condition~\eqref{certification} is always satisfied when $c=0$.
    Our goal is to ensure that neural networks have larger guarded areas for data points in data distribution.

    \section{Calculation and enlargement of guarded area}
    \label{sec:overview}
    In this section, we first describe basic concepts for calculating the provably guarded area defined in \Sec~\ref{sec:formulation}.
    Next, we outline our training procedure to enlarge the guarded area.

    \subsection{Lipschitz constant and guarded area}
    \label{subsec:absence}
    We explain how to calculate the guarded area using the Lipschitz constant.
    If $L_F$ bounds the Lipschitz constant of neural network $F$, we have the following from the definition of the Lipschitz constant:
    \begin{equation}
        \|F(X) - F(X + \epsilon)\|_2 \leq L_{F}\|\epsilon\|_2. \nonumber
    \end{equation}
    Note that if the last layer of $F$ is softmax, we only need to consider the subnetwork before the softmax layer.
    We introduce the notion of prediction margin $M_{F, X}$:
    \begin{equation}
        M_{F, X} := F(X)_{t_X} - \max_{i\neq t_X}\{F(X)_{i}\}. \nonumber
    \end{equation}
    This margin has been studied in relationship to generalization bounds \citep{PACBayesMargins,SpectrallyNoramlizedMargin,PacBayesSpectral}.
    Using the prediction margin, we can prove the following proposition holds.
    \begin{prop}
      \label{prop1}
      \begin{equation}
        \label{eq:prop1}
        (M_{F, X} \geq \sqrt{2}L_{F}\|\epsilon\|_2) \Rightarrow (M_{F, X + \epsilon} \geq 0).
      \end{equation}
    \end{prop}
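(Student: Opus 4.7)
The plan is to reduce the statement to a pairwise comparison between the true class and each competitor class, and then to control each such comparison by the Lipschitz property of $F$ together with the observation that the vector $e_{t_X}-e_j$ has $\ell_2$ norm $\sqrt{2}$.

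First, I would unfold the definition of $M_{F,X+\epsilon}$: it is the minimum over $j\neq t_X$ of $F(X+\epsilon)_{t_X}-F(X+\epsilon)_j$. So it suffices to show that this quantity is nonnegative for every such $j$ under the assumption $M_{F,X}\geq \sqrt{2}L_F\|\epsilon\|_2$. Fix an arbitrary $j\neq t_X$ and decompose
\begin{equation}
F(X+\epsilon)_{t_X}-F(X+\epsilon)_j = \bigl(F(X)_{t_X}-F(X)_j\bigr) - \bigl\langle e_{t_X}-e_j,\, F(X)-F(X+\epsilon)\bigr\rangle. \nonumber
\end{equation}
The first term is at least $M_{F,X}$ by the definition of the prediction margin (the max over competitors dominates any individual competitor).

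Next, I would bound the inner product by Cauchy--Schwarz together with the Lipschitz assumption:
\begin{equation}
\bigl|\langle e_{t_X}-e_j,\, F(X)-F(X+\epsilon)\rangle\bigr| \leq \|e_{t_X}-e_j\|_2\,\|F(X)-F(X+\epsilon)\|_2 \leq \sqrt{2}\,L_F\,\|\epsilon\|_2, \nonumber
\end{equation}
where the $\sqrt{2}$ comes from $\|e_{t_X}-e_j\|_2=\sqrt{2}$ (the two nonzero entries being $+1$ and $-1$) and the final inequality uses the Lipschitz bound stated earlier in the section. Combining these two estimates yields $F(X+\epsilon)_{t_X}-F(X+\epsilon)_j \geq M_{F,X}-\sqrt{2}L_F\|\epsilon\|_2\geq 0$ under the hypothesis, and taking the minimum over $j$ finishes the proof.

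There is no real obstacle here; the only subtle point worth flagging is the factor $\sqrt{2}$, which is often mis-stated as $2$. It arises specifically because we compare two coordinates of $F$ and the difference-of-indicators vector has Euclidean norm $\sqrt{2}$; a naive bound that controls each coordinate separately by $L_F\|\epsilon\|_2$ would give the looser constant $2$, so the use of Cauchy--Schwarz with the explicit vector $e_{t_X}-e_j$ is what makes the constant tight.
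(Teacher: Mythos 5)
Your proof is correct, and it reaches the paper's bound by a different decomposition. The paper proves the quantitative inequality $M_{F,X+\epsilon}\geq M_{F,X}-\sqrt{2}L_F\|\epsilon\|_2$ while keeping the max structure intact: it first establishes the auxiliary lemma $\lvert\max_{i\neq t_X}x_i-\max_{i\neq t_X}y_i\rvert\leq\max_{i\neq t_X}\lvert x_i-y_i\rvert$, applies the triangle inequality to the perturbed margin, and then observes that the two relevant deviations $a_1=F(X+\epsilon)_{t_X}-F(X)_{t_X}$ and $a_2$ (the largest competitor deviation) occupy two distinct coordinates of $F(X+\epsilon)-F(X)$, so that $\sqrt{a_1^2+a_2^2}\leq L_F\|\epsilon\|_2$, whence the constrained maximum of $\lvert a_1\rvert+\lvert a_2\rvert$ is $\sqrt{2}L_F\|\epsilon\|_2$. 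You instead fix a competitor $j$, write the pairwise margin as $\langle e_{t_X}-e_j,\,F(\cdot)\rangle$, and apply Cauchy--Schwarz with $\|e_{t_X}-e_j\|_2=\sqrt{2}$; this is the same underlying estimate (Cauchy--Schwarz against a two-entry sign vector is exactly the paper's two-variable optimization), but fixing $j$ before perturbing lets you take the minimum over $j$ only at the very end, so the max-comparison lemma is never needed. Your packaging has a structural advantage: it exhibits each pairwise margin as a Lipschitz functional of $F$ with constant $\|e_{t_X}-e_j\|_2\,L_F$, which is precisely the mechanism behind Proposition~\ref{prop2} (with $e_{t_X}-e_j$ replaced by $w_{t_X}-w_i$), so the two propositions become instances of one argument. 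The paper's route, in exchange, isolates the reusable fact that a max moves by at most the largest coordinate perturbation and makes explicit that only two output coordinates matter regardless of which competitor attains the max after perturbation. Both routes give the identical constant, and your closing remark about $\sqrt{2}$ versus the naive $2$ is accurate.
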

    The details of the proof are in \appendixing{~\ref{sec:appendix-prop1}}.
    Thus, perturbations smaller than $M_{F, X} / \left(\sqrt{2}L_{F}\right)$ cannot deceive the network $F$ for a data point $X$.
    Proposition~\ref{prop1} sees network $F$ as a function with a multidimensional output.
    This connects the Lipschitz constant of a network, which has been discussed in \citet{Intriguing} and \citet{Parseval}, with the absence of adversarial perturbations.
    If we cast the problem to a set of functions with a one-dimensional output, we can obtain a variant of Prop.~\ref{prop1}.
    Assume that the last layer before softmax in $F$ is a fully-connected layer and $w_i$ is the $i$-th raw of its weight matrix.
    Let $L_{\mathrm{sub}}$ be a Lipschitz constant of a sub-network of $F$ before the last fully-connected layer.
    We obtain the following proposition directly from the definition of the Lipschitz constant ~\citep{FormalGuarantee, Extreme}.
    \begin{prop}
      \label{prop2}
      \begin{equation}
        \label{eq:prop2}
        (\forall i, (F_{t_X} - F_{i} \geq L_{\mathrm{sub}}\|w_{t_X} - w_{i}\|_2\|\epsilon\|_2)) \Rightarrow (M_{F, X + \epsilon} \geq 0).
      \end{equation}
    \end{prop}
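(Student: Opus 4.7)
The plan is to unwind the last fully-connected layer explicitly and then apply the Lipschitz bound on the sub-network together with Cauchy-Schwarz, so that the change in each pairwise margin $F_{t_X}-F_i$ under the perturbation $\epsilon$ can be controlled by the hypothesized quantity.

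First I would write $F(X)_i = w_i^\top g(X) + b_i$, where $g$ denotes the sub-network before the last fully-connected layer (with Lipschitz constant $L_{\mathrm{sub}}$) and $b_i$ is the corresponding bias. Subtracting the equations for $t_X$ and $i$ gives
\begin{equation}
F(X+\epsilon)_{t_X} - F(X+\epsilon)_i = \bigl(F(X)_{t_X} - F(X)_i\bigr) + (w_{t_X} - w_i)^\top \bigl(g(X+\epsilon) - g(X)\bigr), \nonumber
\end{equation}
since the biases cancel.

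Next I would bound the second term. By Cauchy-Schwarz and the Lipschitz property of $g$,
\begin{equation}
\bigl|(w_{t_X} - w_i)^\top (g(X+\epsilon) - g(X))\bigr| \leq \|w_{t_X} - w_i\|_2 \cdot L_{\mathrm{sub}} \|\epsilon\|_2. \nonumber
\end{equation}
Plugging this into the identity above and invoking the hypothesis $F(X)_{t_X} - F(X)_i \geq L_{\mathrm{sub}}\|w_{t_X} - w_i\|_2 \|\epsilon\|_2$ yields $F(X+\epsilon)_{t_X} - F(X+\epsilon)_i \geq 0$ for every $i$. Since this holds in particular for $i = \argmax_{j \neq t_X} F(X+\epsilon)_j$, we obtain $M_{F, X+\epsilon} \geq 0$.

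There is not really a hard step here; the only thing to be careful about is that the bound must use the norm of the \emph{difference} $w_{t_X} - w_i$ rather than individual row norms (which is what makes this tighter than the bound underlying Proposition~\ref{prop1}). This improvement comes for free from treating each pairwise logit gap as a scalar-output function and applying Cauchy-Schwarz to the single inner product $(w_{t_X}-w_i)^\top(g(X+\epsilon)-g(X))$, instead of bounding $\|F(X+\epsilon)-F(X)\|_2$ first.
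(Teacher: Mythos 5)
Your proof is correct and is exactly the argument the paper intends: the paper gives no separate proof, stating that Proposition~\ref{prop2} follows directly from the definition of the Lipschitz constant, and your derivation---writing each pairwise logit gap as $(w_{t_X}-w_i)^\top g(\cdot)$ plus a constant, then applying Cauchy--Schwarz and the Lipschitz property of the sub-network---is the standard instantiation of that claim, matching the cited prior work. Your closing remark about using $\|w_{t_X}-w_i\|_2$ rather than individual row norms correctly identifies why this per-pair scalar treatment can be tighter than the multidimensional bound behind Proposition~\ref{prop1}.
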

    We can use either Prop.~\ref{prop1} or Prop.~\ref{prop2} for the certification.
    Calculations of the Lipschitz constant, which is not straightforward in large and complex networks, will be explained in Sec.~\ref{sec:calculation}.

    \subsection{Guarded area enlargement}
    \label{subsec:training}
    To ensure non-trivial guarded areas, we propose a training procedure that enlarges the provably guarded area.

    \paragraph{Lipschitz-margin training:}
    To encourage conditions Eq.\eqref{eq:prop1} or Eq.\eqref{eq:prop2} to be satisfied with the training data, we convert them into losses.
    We take Eq.\eqref{eq:prop1} as an example.
    To make Eq.\eqref{eq:prop1} satisfied for perturbations with $\ell_2$-norm larger than $c$, we require the following condition.
    \begin{equation}
      \forall i\neq t_X, (F_{t_X} \geq F_i + \sqrt{2}cL_F).
    \end{equation}
    Thus, we add $\sqrt{2}cL_{F}$ to all elements in logits except for the index corresponding to $t_X$.
    In training, we calculate an estimation of the upper bound of $L_F$ with a computationally efficient and differentiable way and use it instead of $L_F$.
    Hyperparameter $c$ is specified by users.
    We call this training procedure \emph{Lipschitz-margin training} (LMT).
    The algorithm is provided in Figure~\ref{alg:lmt}.
    Using Eq.(\ref{eq:prop2}) instead of Eq.(\ref{eq:prop1}) is straightforward.
    Small additional techniques to make LMT more stable is given in \appendixing{~\ref{appendix-stabilize}}.

    \paragraph{Interpretation of LMT:}
    From the former paragraph, we can see that LMT maximizes the number of training data points that have larger guarded areas than $c$, as long as the original training procedure maximizes the number of them that are correctly classified.
    We experimentally evaluate its generalization to test data in Sec.~\ref{sec:experiments}.
    The hyperparameter $c$ is easy to interpret and easy to tune.
    The larger $c$ we specify, the stronger invariant property the trained network will have.
    However, this does not mean that the trained network always has high accuracy against noisy examples.
    To see this, consider the case where $c$ is extremely large.
    In such a case, constant functions become an optimal solution.
    We can interpret LMT as an interpolation between the original function, which is highly expressive but extremely non-smooth, and constant functions, which are robust and smooth.

    \paragraph{Computational costs:}
    A main computational overhead of LMT is the calculation of the Lipschitz constant.
    We show in Sec.~\ref{sec:calculation} that its computational cost is almost the same as increasing the batch size by one.
    Since we typically have tens or hundreds of samples in a mini-batch, this cost is negligible.

    \begin{figure}[b]
        \begin{minipage}{.47\hsize}
            \begin{center}
                \begin{algorithm}[H]
                    \SetStartEndCondition{ }{}{}%
                    \SetKwProg{Fn}{def}{\string:}{}
                    \SetKwFunction{Range}{range}
                    \SetKw{KwTo}{in}\SetKwFor{For}{for}{\string:}{}%
                    \SetKwIF{If}{ElseIf}{Else}{if}{:}{elif}{else:}{}%
                    \SetKwFor{While}{while}{:}{fintq}%
                    \SetKwFor{ForEach}{foreach}{:}{fintq}%
                    \AlgoDontDisplayBlockMarkers\SetAlgoNoEnd\SetAlgoNoLine%
                    \SetKwFunction{CalcGrad}{CalcGrad}
                    \SetKwFunction{CalcLoss}{CalcLoss}
                    \SetKwFunction{Forward}{Forward}
                    \SetKwFunction{CalcLipschitzConst}{CalcLipschitzConst}
                    \SetKwFunction{SoftmaxIfNecessary}{SoftmaxIfNecessary}
                    \SetKwFunction{Encode}{Encode}
                    \SetKwFunction{CommunicateAndUpdate}{CommunicateAndUpdate}
                    \SetKwInOut{hyperparam}{hyperparam}
                    \SetKwInOut{input}{input}
                    \hyperparam{$c :$ required robustness}
                    \input{$X :$ image, $t_X$: label of $X$}
                    $y \leftarrow $ Forward($X$)\;
                    $L \leftarrow $CalcLipschitzConst()\;
                    \ForEach{index $i$}{
                      \If{$i \neq t_X$}{
                          $y_i \mathrel{+}= \sqrt{2}Lc$\;
                        }
                    }
                    $p \leftarrow $ SoftmaxIfNecessary($y$)\;
                    $\ell \leftarrow $ CalcLoss($p, t_X$)\;
                    \caption{Lipschitz-margin training}
                \end{algorithm}
            \end{center}
            \caption{
              Lipschitz-margin training algorithm when we use Prop.~\ref{prop1}.
            }
            \label{alg:lmt}
        \end{minipage}
        \begin{minipage}{.05\hsize}
            \begin{center}
            \end{center}
        \end{minipage}
        \begin{minipage}{.47\hsize}
            \begin{center}
                \begin{algorithm}[H]
                    \SetStartEndCondition{ }{}{}%
                    \SetKwProg{Fn}{def}{\string:}{}
                    \SetKwFunction{Range}{range}
                    \SetKw{KwTo}{in}\SetKwFor{For}{for}{\string:}{}%
                    \SetKwIF{If}{ElseIf}{Else}{if}{:}{elif}{else:}{}%
                    \SetKwFor{While}{while}{:}{fintq}%
                    \SetKwFor{ForEach}{foreach}{:}{fintq}%
                    \AlgoDontDisplayBlockMarkers\SetAlgoNoEnd\SetAlgoNoLine%
                    \SetKwFunction{CalcGrad}{CalcGrad}
                    \SetKwFunction{Sign}{Sign}
                    \SetKwFunction{CalcGrad}{CalcGrad}
                    \SetKwFunction{Encode}{Encode}
                    \SetKwFunction{CalcLipschitzConst}{CalcLipschitzConst}
                    \SetKwFunction{CalcLoss}{CalcLoss}
                    \SetKwInOut{hyperparam}{hyperparam}
                    \SetKwInOut{input}{input}
                    \SetKwInOut{target}{target}
                    \input{$u :$ array at previous iteration}
                    \target{$f :$ linear function}
                    $u \leftarrow u / \|u\|_2$\;
                    // $\sigma$ is an approximated spectral norm\;
                    $\sigma \leftarrow \|f(u)\|_2$\;
                    $L \leftarrow $ CalcLipschitzConst($\sigma$)\;
                    $\ell \leftarrow $ CalcLoss($L$)\;
                    $u \leftarrow \frac{\partial \ell}{\partial u}$\;
                    \caption{Calculation of operator norm}
                \end{algorithm}
            \end{center}
            \caption{
              Calculation of the spectral norm of linear components at training time.
            }
            \label{alg:power}
        \end{minipage}
    \end{figure}

    \section{Calculation of the Lipschitz constant}
    \label{sec:calculation}
    In this section, we first describe a method to calculate upper bounds of the Lipschitz constant.
    We bound the Lipschitz constant of each component and recursively calculate the overall bound.
    The concept is from \citet{Intriguing}.
    While prior work required separate analysis for slightly different components \citep{Intriguing, LowerBounds, Parseval, Reachability}, we provide a more unified analysis.
    Furthermore, we provide a fast calculation algorithm for both the upper bounds and their differentiable approximation.

    \subsection{Composition, addition, and concatenation}
    \label{subsubsec:composition}
    We describe the relationships between the Lipschitz constants and some functionals which frequently appears in deep neural networks: composition, addition, and concatenation.
    Let $f$ and $g$ be functions with Lipschitz constants  bounded by $L_1$ and $L_2$, respectively.
    The Lipschitz constant of output for each functional is bounded as follows:
    \begin{align}
        \small
        \text{composition} \hspace{0.1in} f \circ g &: L_1 \cdot L_2, \hspace{0.1in} 
        \text{addition} \hspace{0.1in} f + g &: L_1 + L_2, \hspace{0.1in} 
        \text{concatenation} \hspace{0.1in} \left(f, g\right) &: \sqrt{L_1^2 + L_2^2}. \nonumber
    \end{align}

    \subsection{Major Components}
    \label{subsubsec:layers}
    We describe bounds of the Lipschitz constants of major layers commonly used in image recognition tasks.
    We note that we can ignore any bias parameters because they do not change the Lipschitz constants of each layer.

    \paragraph{Linear layers in general:}
    Fully-connected, convolutional and normalization layers are typically linear operations at inference time.
    For instance, batch-normalization is a multiplication of a diagonal matrix whose $i$-th element is $\gamma_i/\sqrt{\sigma_i^2 + \epsilon}$, where $\gamma_i, \sigma_i^2, \epsilon$ are
    a scaling parameter, running average of variance, and a constant, respectively.
    Since the composition of linear operators is also linear, we can jointly calculate the Lipschitz constant of some common pairs of layers such as convolution + batch-normalization.
    By using the following theorem, we proposed a more unified algorithm than \citet{SpectralNorm}.
    \begin{theorem}
    \label{thm-itergrad}
    Let $\phi$ be a linear operator from $\mathbb{R}^n$ to $\mathbb{R}^m$, where $n<\infty$ and $m<\infty$.
    We initialize a vector $u\in \mathbb{R}^{n}$ from a Gaussian with zero mean and unit variance.
    When we iteratively apply the following update formula,
    the $\ell_2$-norm of $u$ converges to the square of the operator norm of $\phi$ in terms of $\ell_2$-norm, almost surely.
    \begin{align}
    u \leftarrow u / \|u\|_2, \hspace{0.2in}
    v \leftarrow \phi(u), \hspace{0.2in}
    u \leftarrow \frac{1}{2}\frac{\partial \|v\|_2^2}{\partial u}. \nonumber
    \end{align}
    \end{theorem}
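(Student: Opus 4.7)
The plan is to recognize the stated update as normalized power iteration applied to the Gram matrix of $\phi$. Let $A \in \mathbb{R}^{m \times n}$ be the matrix representing $\phi$, so $v = Au$ and $\|v\|_2^2 = u^\top A^\top A u$. Then $\tfrac{1}{2}\,\partial \|v\|_2^2/\partial u = A^\top A u$, so writing $B := A^\top A$, one full sweep of steps 1--3 sends $u$ to $B(u/\|u\|_2)$. Letting $u_k$ denote $u$ after the $k$-th sweep, we therefore have $u_{k+1} = B u_k / \|u_k\|_2$ and in particular $\|u_{k+1}\|_2 = \|B u_k\|_2 / \|u_k\|_2$. The target value $\|\phi\|_{op}^2$ equals $\lambda_1 := \lambda_{\max}(B)$, since $B$ is symmetric positive semidefinite with eigenvalues equal to the squared singular values of $A$.

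Next, I would invoke the standard analysis of power iteration. Diagonalize $B = \sum_i \lambda_i e_i e_i^\top$ with $\lambda_1 \geq \lambda_2 \geq \dots \geq \lambda_n \geq 0$ in an orthonormal eigenbasis $\{e_i\}$, and write $u_0 = \sum_i c_i e_i$. By induction, $u_k$ is a positive scalar multiple of $\sum_i \lambda_i^k c_i e_i$. Because the initial $u_0$ is drawn from a non-degenerate Gaussian and the top eigenspace $E_1 := \mathrm{span}\{e_i : \lambda_i = \lambda_1\}$ has positive dimension, the projection of $u_0$ onto $E_1$ is almost surely nonzero. Coefficients outside $E_1$ then decay geometrically relative to those inside at rate at most $(\lambda_2/\lambda_1)^k$ (assuming $\lambda_1 > 0$), so the normalized iterate $\hat u_k := u_k / \|u_k\|_2$ satisfies $\mathrm{dist}(\hat u_k, E_1) \to 0$ almost surely.

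To close the argument, observe that the map $w \mapsto \|Bw\|_2$ is continuous and equals $\lambda_1$ on every unit vector of $E_1$. Combined with the previous paragraph, $\|B \hat u_k\|_2 \to \lambda_1$ almost surely, and since $\|u_{k+1}\|_2 = \|B \hat u_k\|_2$, the $\ell_2$-norm of $u$ recorded after step 3 converges to $\lambda_1 = \|\phi\|_{op}^2$, as required.

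The main subtlety I expect to handle is the possible degeneracy of the top eigenspace: when $\lambda_1$ has multiplicity greater than one, $\hat u_k$ itself need not converge to any single vector, only to the unit sphere of $E_1$. The statement is nevertheless about $\|u\|_2$, not about the direction of $u$, and since $\|B \cdot\|_2$ is constant on that sphere, the norm still converges. The trivial corner case $\phi \equiv 0$ (equivalently $\lambda_1 = 0$) is immediate: after one sweep $u_k = 0$ and $\|u_k\|_2 = 0 = \|\phi\|_{op}^2$. A second minor point is to note that $\|u_k\|_2$ stays strictly positive for every $k \geq 0$ almost surely, so the normalization in step 1 is well-defined throughout the iteration.
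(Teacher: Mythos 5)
Your proposal is correct and follows essentially the same route as the paper's proof: compute $\tfrac{1}{2}\partial\|v\|_2^2/\partial u = M^{\top}Mu$, recognize the update as power iteration on the Gram matrix $M^{\top}M$, and conclude that $\|u\|_2$ converges almost surely to the largest eigenvalue $\lambda_{\max}(M^{\top}M) = \|\phi\|_2^2$ thanks to the Gaussian initialization. The only difference is that you spell out the standard power-iteration convergence analysis (almost-sure nonzero projection onto the top eigenspace, the degenerate-multiplicity case, and the $\phi\equiv 0$ corner case) that the paper invokes implicitly, which is a welcome tightening but not a different argument.
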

    The proof is found in \appendixing{~\ref{subsec:appendix-thm-itergrad}}.
    The algorithm for training time is provided in Figure~\ref{alg:power}.
    At training time, we need only one iteration of the above update formula as with \citet{SpectralNorm}.
    Note that for estimation of the operator norm for a forward path, we do not require to use gradients.
    In a convolutional layer, for instance, we do not require another convolution operation or transposed convolution.
    We only need to increase the batch size by one.
    The wide availability of our calculation method will be especially useful when more complicated linear operators than usual convolution appear in the future.
    Since we want to ensure that the calculated bound is an \emph{upper-bound} for certification, we can use the following theorem.
    \begin{theorem}
    \label{thm-iteration}
    Let $\|\phi\|_2$ and $R_k$ be an operator norm of a function $\phi$ in terms of the $\ell_2$-norm and the $\ell_2$-norm of the vector $u$ at the $k$-th iteration, where each element $u$ is initialized by a Gaussian with zero mean and unit variance.
    With probability higher than $1 - \sqrt{2/\pi}$, the error between $\|\phi\|_2^2$ and $R_k$ is smaller than $(\Delta_k + \sqrt{\Delta_k(4R_k+\Delta_k)})/2$, where $\Delta_k := (R_k - R_{k-1})n$.
    \end{theorem}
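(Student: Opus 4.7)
Diagonalize $M := \phi^{\top}\phi$ with eigenpairs $(\mu_i, v_i)$ ordered so that $\mu_1 = \|\phi\|_2^2 \geq \mu_2 \geq \cdots \geq 0$. By rotation invariance of the Gaussian initialization, $u_0 = \sum_i \alpha_i v_i$ with $\alpha_i$ i.i.d.\ $\mathcal{N}(0,1)$, and a short induction on the update rule of Theorem~\ref{thm-itergrad} yields $R_k^2 = s_k/s_{k-1}$, where $s_j := \sum_i \alpha_i^2 \mu_i^{2j}$. The log-convexity estimate $s_{k-1}^2 \leq s_k s_{k-2}$ (Cauchy--Schwarz on the sequence $\alpha_i \mu_i^{j}$) records the monotonicity $R_k \geq R_{k-1}$ already implicit in Theorem~\ref{thm-itergrad}; the plan for the present theorem is to quantitatively reverse this step, turning the observable increment $R_k - R_{k-1}$ into a bound on the unobservable error $\mu_1 - R_k$.

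The deterministic core is a sharp Cauchy--Schwarz. Starting from $\mu_1^2 s_{k-1} - s_k = \sum_{i\geq 2}\alpha_i^2\mu_i^{2(k-1)}(\mu_1^2 - \mu_i^2)$ and splitting each summand as $(\alpha_i \mu_i^{k})\cdot(\alpha_i \mu_i^{k-2}(\mu_1^2 - \mu_i^2))$, one Cauchy--Schwarz step bounds its square by $s_k \sum_{i\geq 2}\alpha_i^2\mu_i^{2(k-2)}(\mu_1^2-\mu_i^2)^2$. The second factor is exactly the $j = 1$ slice of the symmetric identity
\begin{equation}
2\,(s_k s_{k-2} - s_{k-1}^2) \;=\; \sum_{i,j}\alpha_i^2\alpha_j^2\,\mu_i^{2(k-2)}\mu_j^{2(k-2)}(\mu_i^2 - \mu_j^2)^2, \nonumber
\end{equation}
so dropping the remaining nonnegative pairs, dividing by $\alpha_1^2 \mu_1^{2(k-2)}$, and invoking $s_{k-2} \leq \mu_1^{2(k-2)}\|u_0\|^2$ gives
\begin{equation}
(\mu_1^2 - R_k^2)^2 \;\leq\; R_k^2\,(R_k^2 - R_{k-1}^2)\,\|u_0\|^2/\alpha_1^2. \nonumber
\end{equation}
Writing $E := \|\phi\|_2^2 - R_k$, the AM--GM bound $(\mu_1 + R_k)^2 \geq 4\mu_1 R_k$ on the left and $R_k^2 - R_{k-1}^2 \leq 2R_k(R_k - R_{k-1})$ on the right collapse this to the clean inequality $E^2 \leq R_k(R_k - R_{k-1})\,\|u_0\|^2/(2\alpha_1^2)$, in which the only remaining random factor is the ratio $\|u_0\|^2/\alpha_1^2$.

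For the probabilistic half, the pointwise Gaussian density bound $\varphi(x) \leq 1/\sqrt{2\pi}$ gives $\Pr[|\alpha_1| < 1] \leq \sqrt{2/\pi}$, so $\alpha_1^2 \geq 1$ with probability at least $1 - \sqrt{2/\pi}$; combining this with an exchangeability argument on the fraction $\alpha_1^2/\|u_0\|^2$ (which, by rotation invariance of the Gaussian, has the $\operatorname{Beta}(1/2,(n-1)/2)$ distribution) absorbs the ratio $\|u_0\|^2/\alpha_1^2$ into the clean dimension factor $n$ on the same event. Substituting into the deterministic bound yields $E^2 \leq nR_k(R_k - R_{k-1})/2 \leq (R_k + E)\Delta_k$ (using $R_k \leq R_k + E = \mu_1$ in the last step), i.e.\ $E^2 - \Delta_k E - R_k\Delta_k \leq 0$, whose positive root is exactly $(\Delta_k + \sqrt{\Delta_k(4R_k + \Delta_k)})/2$.

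The hardest step I anticipate is this last probabilistic piece: producing the clean factor $n$ on a single event of probability $1 - \sqrt{2/\pi}$. A naive union of $\{|\alpha_1| \geq 1\}$ and a tail bound on $\|u_0\|^2 = \sum_i \alpha_i^2$ would spend too much probability to reach $1 - \sqrt{2/\pi}$; the right move is to work directly with the scale-free ratio $\alpha_1^2/\|u_0\|^2$, whose Beta distribution and exchangeability across coordinates are essential for a single-event guarantee with the stated constant. The reverse Cauchy--Schwarz of the middle paragraph is the most computational but should pose no conceptual obstacle.
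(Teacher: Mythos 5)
Your deterministic core is correct, and it is genuinely more than the paper provides: the paper proves this theorem purely by citation (Theorem 1.1 and Prop.\ 2.2 of its \emph{ErrorBound} reference, applied to the power iteration on $M^{\top}M$ identified in the proof of Theorem~\ref{thm-itergrad}), whereas you re-derive the a posteriori bound from scratch. I checked the pieces: $R_k^2 = s_k/s_{k-1}$ follows from the update rule, the symmetric identity $2(s_k s_{k-2}-s_{k-1}^2)=\sum_{i,j}\alpha_i^2\alpha_j^2\mu_i^{2(k-2)}\mu_j^{2(k-2)}(\mu_i^2-\mu_j^2)^2$ is exact, the Cauchy--Schwarz split of $\mu_1^2 s_{k-1}-s_k$, the slicing at index $1$, and the AM--GM collapse all go through, yielding $E^2 \leq R_k(R_k-R_{k-1})\|u_0\|^2/(2\alpha_1^2)$ with only the random ratio $\|u_0\|^2/\alpha_1^2$ left to control. (The final detour through the quadratic $E^2-\Delta_k E - R_k\Delta_k\leq 0$ is unnecessary: once $E^2\leq R_k\Delta_k/2$ you are already below $(\Delta_k+\sqrt{\Delta_k(4R_k+\Delta_k)})/2$.)

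The genuine gap is exactly where you predicted it: the probabilistic step, as sketched, does not work. The event $\{|\alpha_1|\geq 1\}$, which your density bound supplies with probability $1-\sqrt{2/\pi}$, gives no control whatsoever on $\|u_0\|^2/\alpha_1^2$, since $\|u_0\|^2$ is unbounded on that event; and intersecting with a second event for the Beta tail of $\alpha_1^2/\|u_0\|^2$ would spend probability beyond the budget. The Beta$(1/2,(n-1)/2)$ identification is correct but ``exchangeability'' does not by itself produce the constant $\sqrt{2/\pi}$. The single-event argument you need is a conditioning trick on the scale-free ratio directly: writing $S:=\sum_{i\geq 2}\alpha_i^2$, the bad event $\{\|u_0\|^2 \geq n\alpha_1^2\}$ equals $\{\alpha_1^2 \leq S/(n-1)\}$, and since $\alpha_1$ is independent of $S$, the density bound $\varphi\leq 1/\sqrt{2\pi}$ gives conditional probability at most $2\sqrt{S/(n-1)}/\sqrt{2\pi}$, whence by Jensen
\begin{equation}
\Pr\left[\|u_0\|^2 \geq n\,\alpha_1^2\right] \;\leq\; \sqrt{\frac{2}{\pi(n-1)}}\;\mathbb{E}\left[\sqrt{S}\right] \;\leq\; \sqrt{\frac{2}{\pi(n-1)}}\cdot\sqrt{n-1} \;=\; \sqrt{2/\pi}. \nonumber
\end{equation}
This is essentially Prop.\ 2.2 of the paper's cited reference, with $\omega-1=\sum_{i\geq2}\alpha_i^2/\alpha_1^2$. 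Substituting $\|u_0\|^2/\alpha_1^2\leq n$ into your deterministic inequality gives $E^2\leq R_k\Delta_k/2$ on an event of probability at least $1-\sqrt{2/\pi}$, completing the proof. (Two cosmetic points: your $M:=\phi^{\top}\phi$ is the iteration matrix $M^{\top}M$ of Theorem~\ref{thm-itergrad}, a notation clash worth flagging; and if $\mu_1$ has multiplicity greater than one, take $\alpha_1^2$ to be the total weight on the top eigenspace, which only helps.)
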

    The proof is in \appendixing{~\ref{subsec:appendix-thm-iteration}}, which is mostly from \citet{ErrorBound}.
    If we use a large batch for the power iteration, the probability becomes exponentially closer to one.
    We can also use singular value decomposition as another way for accurate calculation.
    Despite its simplicity, the obtained bound for convolutional layers is much tighter than the previous results in \citet{LowerBounds} and \citet{Parseval}, and that for normalization layers is novel.
    We numerically confirm the improvement of bounds in Sec.~\ref{sec:experiments}.

    \paragraph{Pooling and activation:}
    First, we have the following theorem.
    \begin{theorem}
    \label{thm-pooling}
    Define $f(Z) = (f_1(Z^1), f_2(Z^2), ..., f_\Lambda(Z^\Lambda))$, where $Z^\lambda \subset Z$ and $\|f_\lambda\|_2\leq L$ for all $\lambda$. Then,
    \[\|f\|_2 \leq \sqrt{n}L,\nonumber\]
    where $n := \max_j \lvert\{\lambda| x_j \in Z^\lambda\}\rvert$ and $x_j$ is the $j$-th element of $x$.
    \end{theorem}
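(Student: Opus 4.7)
The plan is to unfold the definition of the operator norm of $f$ and reduce the bound to a counting argument about how many pieces $f_\lambda$ each input coordinate feeds into. Concretely, I would pick two inputs $Z$ and $Z'$, write out
\[
\|f(Z) - f(Z')\|_2^2 = \sum_{\lambda=1}^{\Lambda} \|f_\lambda(Z^\lambda) - f_\lambda(Z'^\lambda)\|_2^2,
\]
and use the hypothesis $\|f_\lambda\|_2 \le L$ on each summand to get an upper bound of $L^2 \sum_\lambda \|Z^\lambda - Z'^\lambda\|_2^2$.

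The next step, which is the core of the argument, is to rewrite the double sum by swapping the order of summation. Each term $\|Z^\lambda - Z'^\lambda\|_2^2$ equals $\sum_{j : x_j \in Z^\lambda}(x_j - x'_j)^2$, so summing over $\lambda$ gives $\sum_j (x_j - x'_j)^2 \cdot |\{\lambda \mid x_j \in Z^\lambda\}|$. The multiplicity factor is bounded pointwise by $n$ by definition, which yields
\[
\sum_{\lambda} \|Z^\lambda - Z'^\lambda\|_2^2 \;\le\; n \,\|Z - Z'\|_2^2.
\]
Combining the two bounds gives $\|f(Z) - f(Z')\|_2 \le \sqrt{n}\,L\,\|Z - Z'\|_2$, and taking the supremum over $Z \neq Z'$ delivers $\|f\|_2 \le \sqrt{n}\, L$.

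I do not expect a serious obstacle here; the statement is essentially a bookkeeping lemma, and the only care needed is to be explicit that $Z^\lambda$ is a sub-vector of $Z$ (with a fixed correspondence of coordinates) so that the swap of summation is justified. One subtlety worth flagging is that the bound $\|f_\lambda\|_2 \le L$ must be interpreted as a Lipschitz bound on $f_\lambda$ (not merely a norm bound on its output), which is what makes the inequality $\|f_\lambda(Z^\lambda) - f_\lambda(Z'^\lambda)\|_2 \le L\|Z^\lambda - Z'^\lambda\|_2$ valid; this matches the paper's convention throughout Section~\ref{sec:calculation}. No further machinery beyond elementary inequalities and a swap of summation order is required.
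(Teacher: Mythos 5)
Your proof is correct, and it establishes the same counting fact as the paper's, but packaged differently. The paper's proof in Appendix~\ref{subsec:appendix-thm-pooling} factorizes $f$ into two maps: a duplication step sending $Z$ to the tuple $(Z^1,\dots,Z^\Lambda)$, whose Lipschitz constant is bounded by $\sqrt{n}$ via the concatenation rule of Appendix~\ref{subsubsec:appendix-concatenation} ($n$-fold repetition being $n$-fold concatenation of identity-like maps), composed with a blockwise map applying the $f_\lambda$ to now-disjoint pieces, which a separate lemma shows has Lipschitz constant at most $L$; the composition rule then yields $\sqrt{n}L$. You instead inline everything into a single computation and isolate the inequality $\sum_\lambda \|Z^\lambda - Z'^\lambda\|_2^2 \leq n\|Z - Z'\|_2^2$ by swapping the order of summation and bounding the multiplicity $\lvert\{\lambda \mid x_j \in Z^\lambda\}\rvert$ pointwise by $n$. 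Both arguments rest on exactly this inequality, so the mathematical content coincides; what your version buys is self-containedness and transparency about where $n$ enters --- the pointwise multiplicity bound is precisely what justifies the sharper repetition counts for strided convolutions and pooling derived in Appendix~\ref{subsec:appendix-n-rep} --- while the paper's version buys reuse of its already-established functional calculus (the concatenation and composition bounds of \Sec~\ref{subsubsec:composition}). The two subtleties you flag (that $\|f_\lambda\|_2 \leq L$ must be read as a Lipschitz bound, since e.g.\ max-pooling blocks are nonlinear, and that each $Z^\lambda$ is a fixed sub-vector of $Z$ so the summation swap is legitimate) are exactly the right ones and match the paper's conventions.
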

    The proof, whose idea comes from \citet{Parseval}, is found in \appendixing{~\ref{subsec:appendix-thm-pooling}}.
    The exact form of $n$ in the pooling and convolutional layers is given in \appendixing{~\ref{subsec:appendix-n-rep}}.
    The assumption in Theorem~\ref{thm-pooling} holds for most layers of networks for image recognition tasks, including pooling layers, convolutional layers, and activation functions.
    Careful counting of $n$ leads to improved bounds on the relationship between the Lipschitz constant of a convolutional layer and the spectral norm of its reshaped kernel from the previous result ~\citep{Parseval}.
    \begin{corollary}
    \label{conv_compare}
    Let $\|\mathrm{Conv}\|_2$ be the operator norm of a convolutional layer in terms of the $\ell_2$-norm, and $\|W'\|_2$ be the spectral norm of a matrix where the kernel of the convolution is reshaped into a matrix with the same number of rows as its output channel size. Assume that the width and the height of its input before padding are larger or equal to those of the kernel. The following inequality holds. 
    \[\|W'\|_2 \leq \|\mathrm{Conv}\|_2 \leq \sqrt{n}\|W'\|_2 \nonumber,\]
    where $n$ is a constant independent of the weight matrix.
    \end{corollary}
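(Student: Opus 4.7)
The plan is to prove the two inequalities separately. Upper bound: view the convolution as a concatenation of patchwise linear maps and invoke Theorem~\ref{thm-pooling}. Lower bound: exhibit a specific input that realizes a fraction of the output norm equal to $\|W'\|_2$, using the hypothesis that the unpadded input is at least as big as the kernel.

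For the upper bound, I index the output spatial locations by $\lambda = 1,\dots,\Lambda$. For each $\lambda$, let $Z^\lambda$ denote the input patch (a block of size $c_{\mathrm{in}} \times k_h \times k_w$) that the kernel is multiplied against to produce the output channel vector at location $\lambda$. Then the convolution can be written in the form $f(Z) = (f_1(Z^1),\dots,f_\Lambda(Z^\Lambda))$, where $f_\lambda$ is, up to reshaping, exactly multiplication by the matrix $W'$. In particular $\|f_\lambda\|_2 = \|W'\|_2$ for every $\lambda$. Theorem~\ref{thm-pooling} then yields $\|\mathrm{Conv}\|_2 = \|f\|_2 \le \sqrt{n}\,\|W'\|_2$, where $n$ is the maximum over input pixels of the number of patches $Z^\lambda$ that contain it; this is precisely the overlap count whose explicit form is deferred to \appendixing{~\ref{subsec:appendix-n-rep}}.

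For the lower bound, let $v\in\mathbb{R}^{c_{\mathrm{in}} k_h k_w}$ be a unit vector with $\|W'v\|_2 = \|W'\|_2$. Because the input has spatial size at least $k_h\times k_w$ before padding, I can choose an output location $\lambda_0$ whose receptive field lies entirely inside the unpadded input. Construct an input $X$ by reshaping $v$ into the $c_{\mathrm{in}}\times k_h\times k_w$ block corresponding to the receptive field of $\lambda_0$ and setting all other entries of $X$ to zero. Then $\|X\|_2 = \|v\|_2 = 1$, and at location $\lambda_0$ the convolution outputs the vector $W'v$, whose norm is $\|W'\|_2$. Since the full output norm is a sum of per-location squared norms, it is bounded below by the contribution from $\lambda_0$ alone:
\[
\|\mathrm{Conv}(X)\|_2 \ge \|W'v\|_2 = \|W'\|_2,
\]
so $\|\mathrm{Conv}\|_2 \ge \|W'\|_2$.

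The only subtlety I expect is making the ``embedding'' in the lower bound genuinely valid: I have to ensure that some output location's receptive field sits strictly inside the unpadded input so that the reshaping of $v$ is consistent with how the convolution indexes the patch, and I must not double count the constraint that other output locations may also be nonzero (they only help the inequality). The input-size-versus-kernel-size hypothesis is exactly what rules out degenerate cases where no output location has a fully interior receptive field; beyond that the remaining work is just bookkeeping about strides, padding, and the reshape identifying $W'$ with each $f_\lambda$.
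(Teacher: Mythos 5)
Your proof takes essentially the same route as the paper's: the upper bound is obtained exactly as in \appendixing{~\ref{subsec:appendix-conv_compare}}, by writing the convolution in the form of Theorem~\ref{thm-pooling} with every $f_\lambda$ equal to multiplication by $W'$, and your singular-vector embedding is simply an explicit rendering of the paper's one-line observation that ``matrix multiplication with $W'$ is applied at least once in the convolution.'' The stride/padding bookkeeping you flag (ensuring some receptive field lies fully inside the unpadded input) is left implicit in the paper as well, so your proposal is faithful to, and if anything more detailed than, the published argument.
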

    The proof of Corollary~\ref{conv_compare} is in \appendixing{~\ref{subsec:appendix-conv_compare}}.
    Lists of the Lipschitz constant of pooling layers and activation functions are summarized in \appendixing{~\ref{sec:appendix-pooling}}.

    \subsection{Putting them together}
    \label{subsubbsec:altogether}
    With recursive computation using the bounds described in the previous sections, we can calculate an upper bound of the Lipschitz constants of the whole network in a differentiable manner with respect to network parameters.
    At inference time, calculation of the Lipschitz constant is required only once.

    In calculations at training time, there may be some notable differences in the Lipschitz constants.
    For example, $\sigma_i$ in a batch normalization layer depends on its input.
    However, we empirically found that calculating the Lipschitz constants using the same bound as inference time effectively regularizes the Lipschitz constant.
    This lets us deal with batch-normalization layers, which prior work ignored despite its impact on the Lipschitz constant~\citep{Parseval, SpectralNorm}.

    \section{Numerical evaluations}
    \label{sec:experiments}
    In this section, we show the results of numerical evaluations.
    Since our goal is to create networks with stronger certification, we evaluated the following three points.
    \begin{enumerate}
        \item Our bounds of the Lipschitz constants are tighter than previous ones (Sec.~\ref{subsec:exp_tightness}).
        \item LMT effectively enlarges the provably guarded area (Secs.~\ref{subsec:exp_tightness} and \ref{subsec:large_network}).
        \item Our calculation technique of the guarded area and LMT are available for modern large and complex networks (Sec.~\ref{subsec:large_network}).
    \end{enumerate}
    We also evaluated the robustness of trained networks against current attacks and confirmed that LMT robustifies networks (Secs.~\ref{subsec:exp_tightness} and~\ref{subsec:large_network}).
    For calculating the Lipschitz constant and guarded area, we used Prop.~\ref{prop2}.
    Detailed experimental setups are available in \appendixing{~\ref{sec:appendix-experiment}}.
    Our codes are available at \github.

    \subsection{Tightness of bounds}
    \label{subsec:exp_tightness}
    We numerically validated improvements of bounds for each component and
    numerically analyzed the tightness of overall bounds of the Lipschitz constant.
    We also see the non-triviality of the provably guarded area.
    We used the same network and hyperparameters as \citet{OuterPolytope}.

    \paragraph{Improvement in each component:}
    We evaluated the difference of bounds in convolutional layers in networks trained using a usual training procedure and LMT.
    Figure~\ref{fig:compare-bounds} shows comparisons between the bounds in the second convolutional layer.
    It also shows the difference of bounds in pooling layers, which does not depend on training methods.
    We can confirm improvement in each bound.
    This results in significant differences in upper-bounds of the Lipschitz constants of the whole networks.

    \begin{figure}[t]
      \begin{minipage}{.33\hsize}
        \begin{center}
          \includegraphics[width=\linewidth]{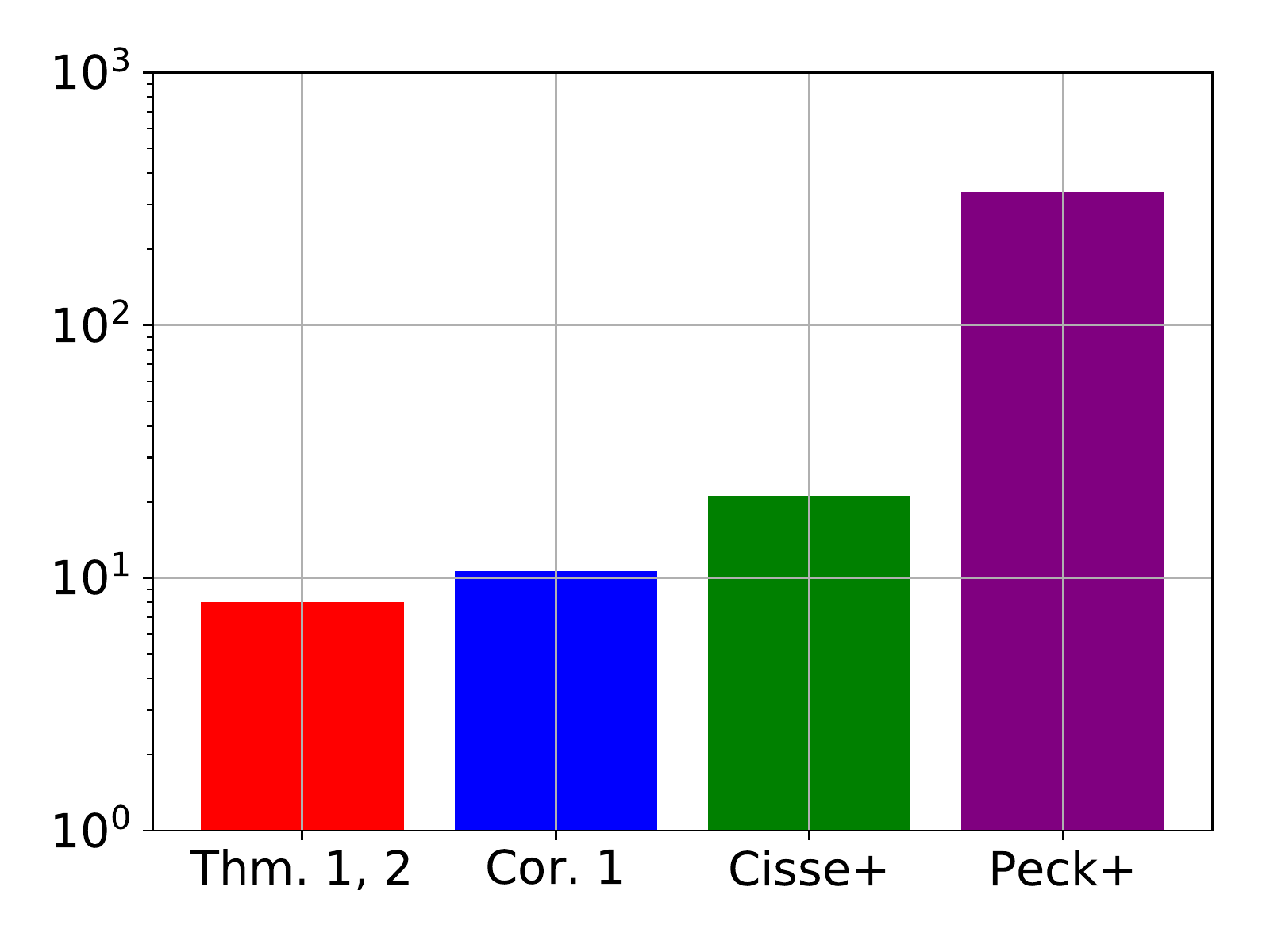}
        \end{center}
      \end{minipage}
      \begin{minipage}{.33\hsize}
        \begin{center}
          \includegraphics[width=\linewidth]{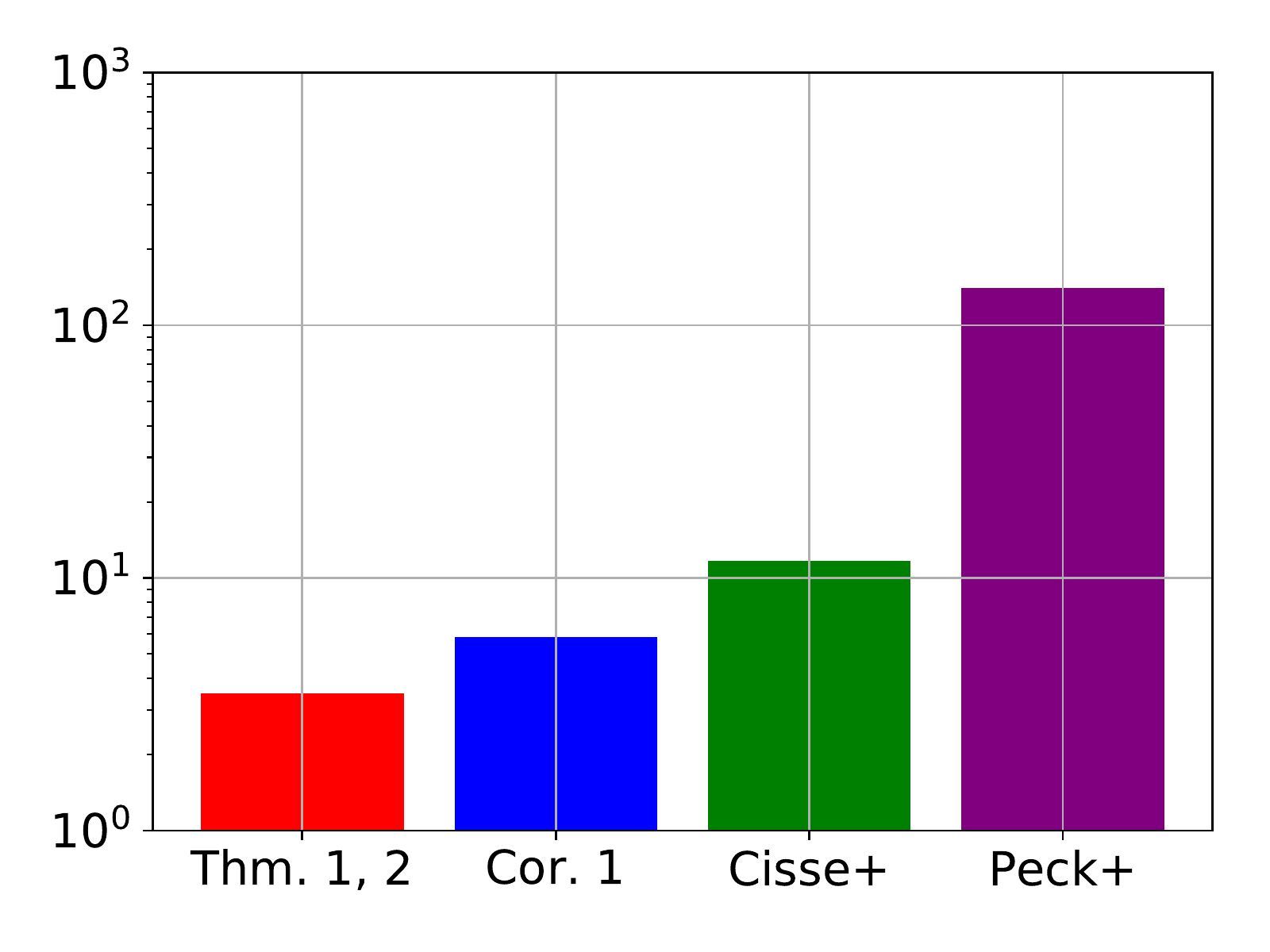}
        \end{center}
      \end{minipage}
      \begin{minipage}{.33\hsize}
        \begin{center}
          \includegraphics[width=\linewidth]{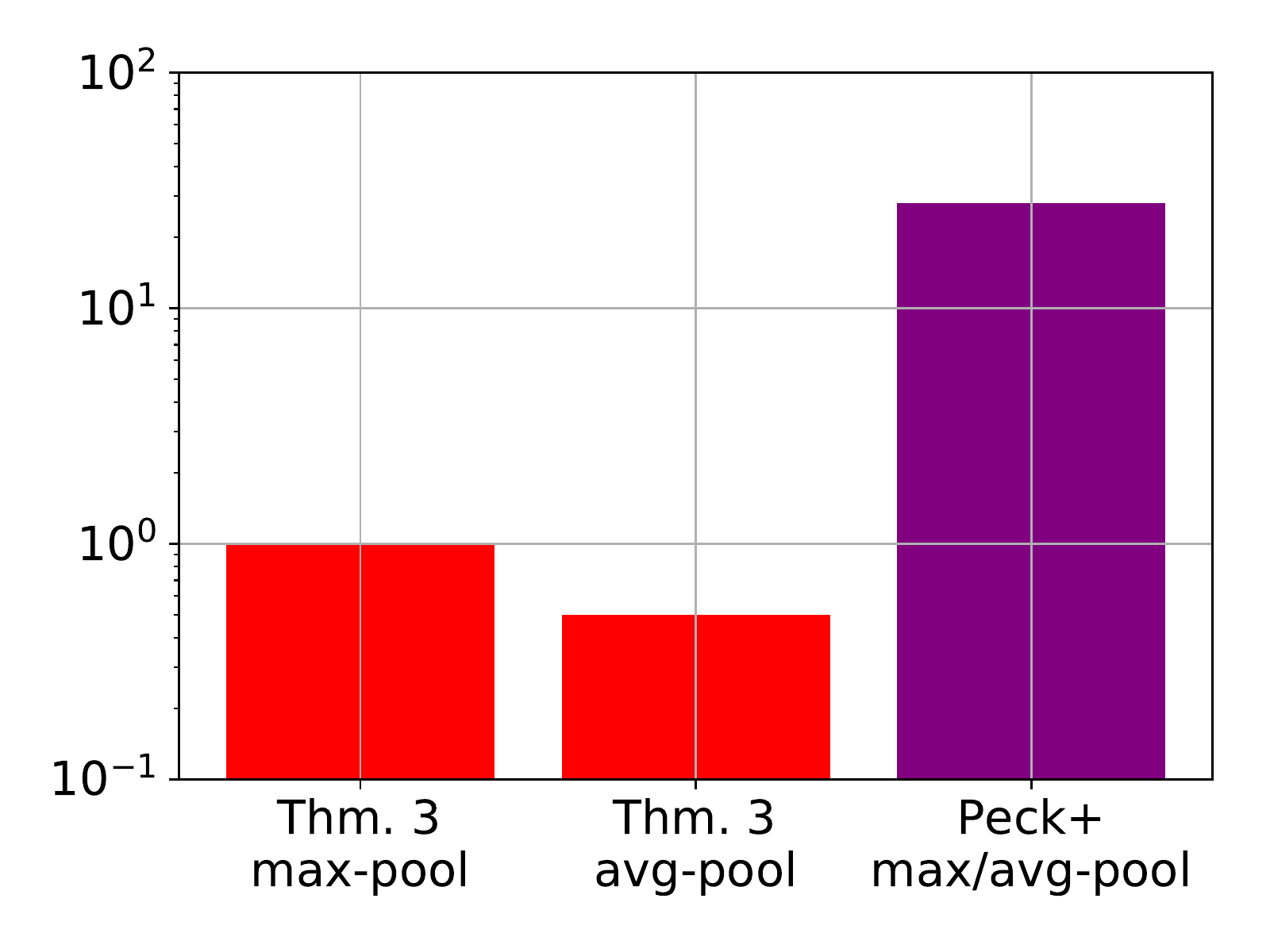}
        \end{center}
      \end{minipage}
      \caption{
        Comparison of bounds in layers.
        Left: the second convolutional layer of a naive model in Sec~\ref{sec:experiments}.
        Center: the second convolutional layer of an LMT model in Sec~\ref{sec:experiments}.
        Right: pooling layers assuming size 2 and its input size is $28\times 28$.
      }
      \label{fig:compare-bounds}
    \end{figure}

    \paragraph{Analysis of tightness:}
    Let $L$ be an upper-bound of the Lipschitz constant calculated by our method.
    Let $L_{\text{local}}, L_{\text{global}}$ be the local and global Lipschitz constants.
    Between them, we have the following relationship.
    \begin{align}
    \label{eq:ineq}
      \underbrace{\frac{\text{Margin}}{L}}_{\text{(A)}}
      \underset{\text{(\rmnum{1})}}{\leq} \underbrace{\frac{\text{Margin}}{L_{\text{global}}}}_{\text{(B)}}
      \underset{\text{(\rmnum{2})}}{\leq} \underbrace{\frac{\text{Margin}}{L_{\text{local}}}}_{\text{(C)}}
      \underset{\text{(\rmnum{3})}}{\leq} \underbrace{\text{Smallest Adversarial Perturbation}}_{\text{(D)}}
    \end{align}
    We analyzed errors in inequalities (\rmnum{1}) -- (\rmnum{3}).
    We define an error of (\rmnum{1}) as (B)$/$(A) and others in the same way.
    We used lower bounds of the local and global Lipschitz constant calculated by the maximum size of gradients found.
    A detailed procedure for the calculation is explained in \appendixing{~\ref{subsubsec:appendix-ineq}}.
    For the generation of adversarial perturbations, we used DeepFool \citep{DeepFool}.
    Note that (\rmnum{3}) does not hold because we calculated mere lower bounds of Lipschitz constants in (B) and (C).
    We analyzed inequality~\eqref{eq:ineq} in an unregularized model, an adversarially trained (AT) model with the $30$-iteration C\&W attack~\citep{CW}, and an LMT model.
    Figure ~\ref{fig:compare-robustness} shows the result.
    With an unregularized model, estimated error ratios in (\rmnum{1}) -- (\rmnum{3}) were $39.9$, $1.13$, and $1.82$ respectively.
    This shows that even if we could precisely calculate the local Lipschitz constant for each data point with possibly substantial computational costs, inequality (\rmnum{3}) becomes more than $1.8$ times looser than the size of adversarial perturbations found by DeepFool.
    In an AT model, the discrepancy became more than 2.4.
    On the other hand, in an LMT model, estimated error ratios in (\rmnum{1}) -- (\rmnum{3}) were $1.42$, $1.02$, and $1.15$ respectively.
    The overall median error between the size of found adversarial perturbations, and the provably guarded area was $1.72$.
    This shows that the trained network became smooth and Lipschitz constant based certifications became significantly tighter when we use LMT.
    This also resulted in better defense against attack.
    For reference, the median of found adversarial perturbations for an unregularized model was $0.97$, while the median of the size of the provably guarded area was $1.02$ in an LMT model.
    \begin{figure}[t]
      \begin{minipage}{.33\hsize}
        \begin{center}
          \includegraphics[width=\linewidth]{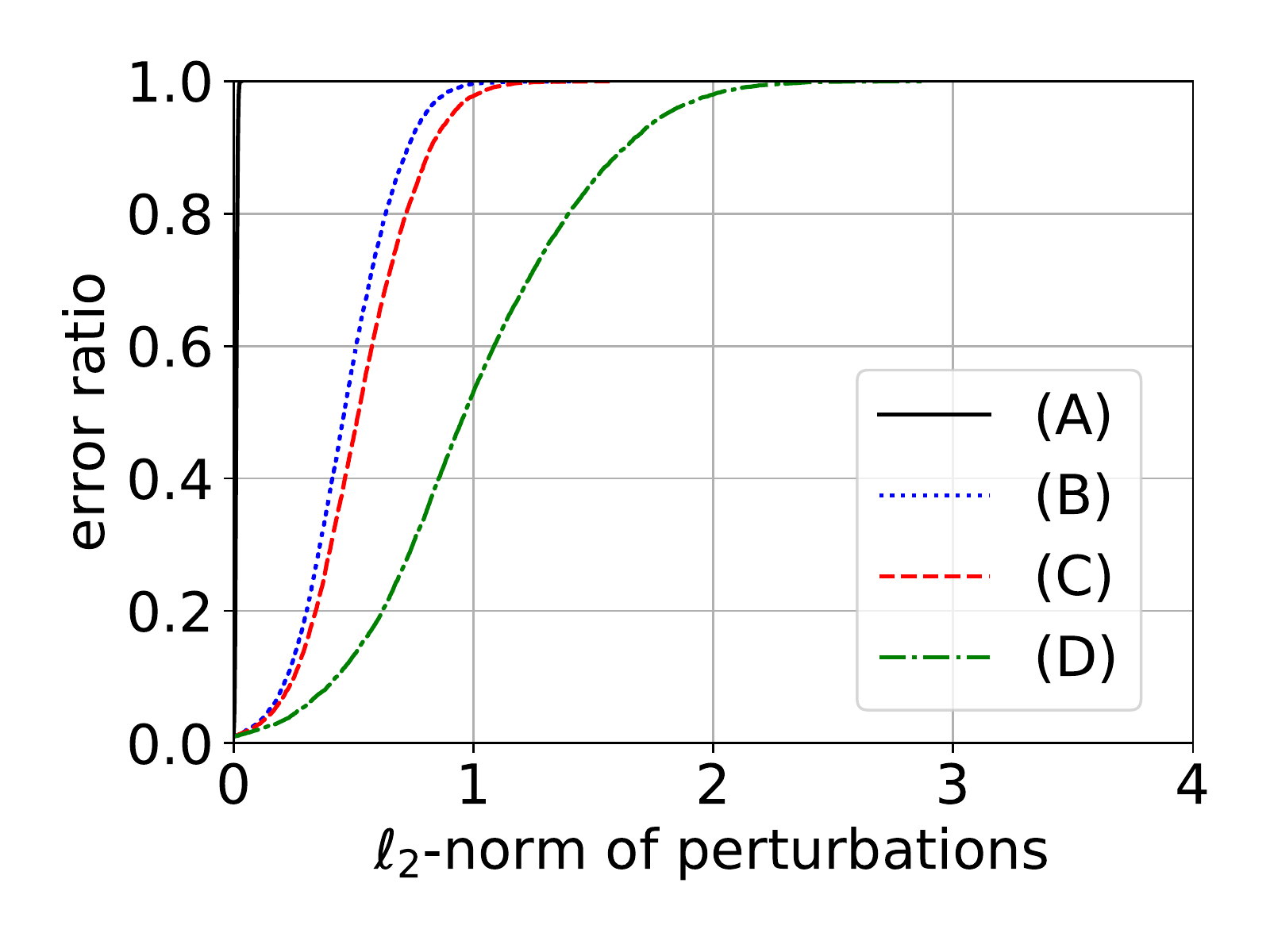}
        \end{center}
      \end{minipage}
      \begin{minipage}{.33\hsize}
        \begin{center}
          \includegraphics[width=\linewidth]{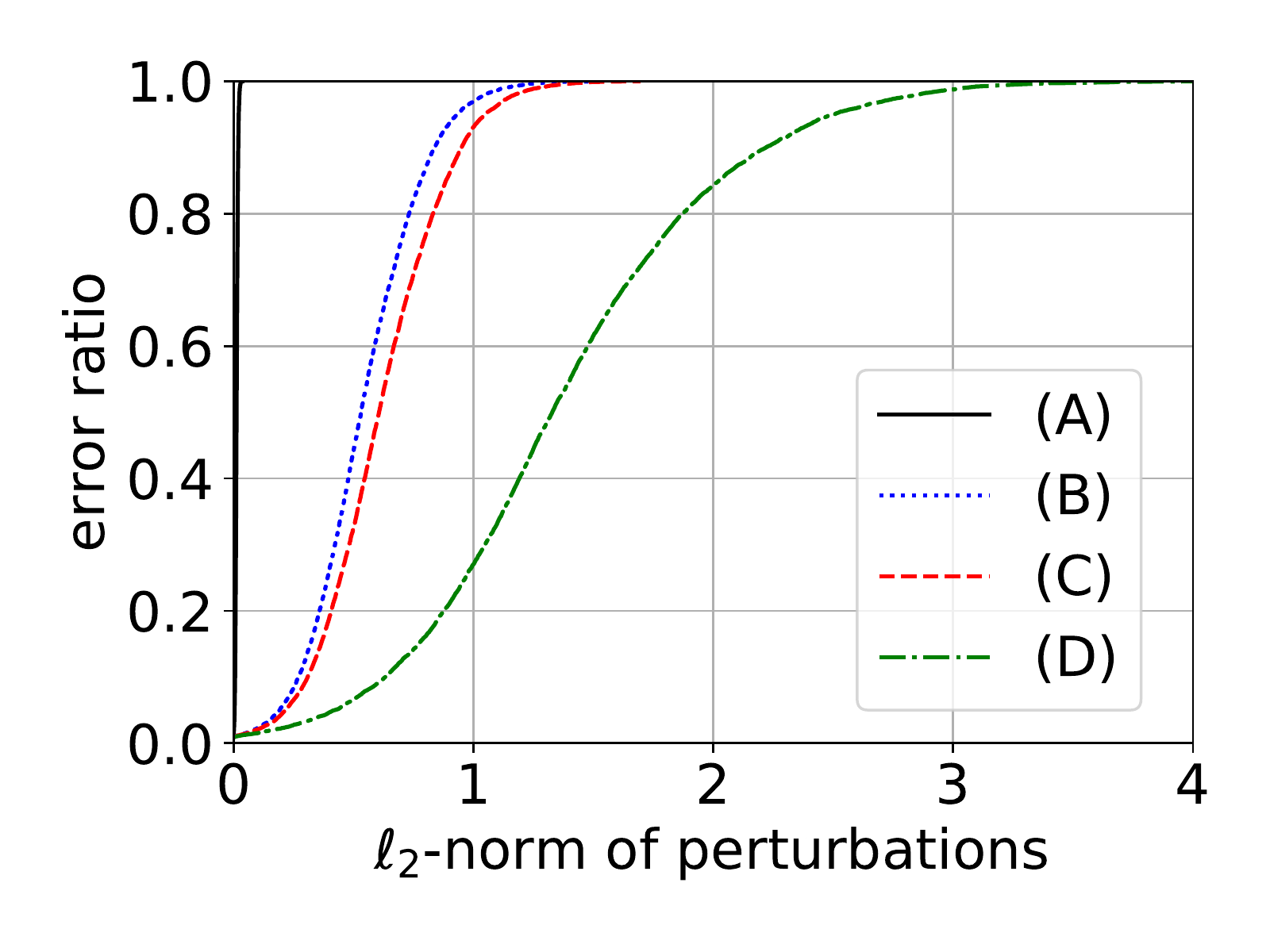}
        \end{center}
      \end{minipage}
      \begin{minipage}{.33\hsize}
        \begin{center}
          \includegraphics[width=\linewidth]{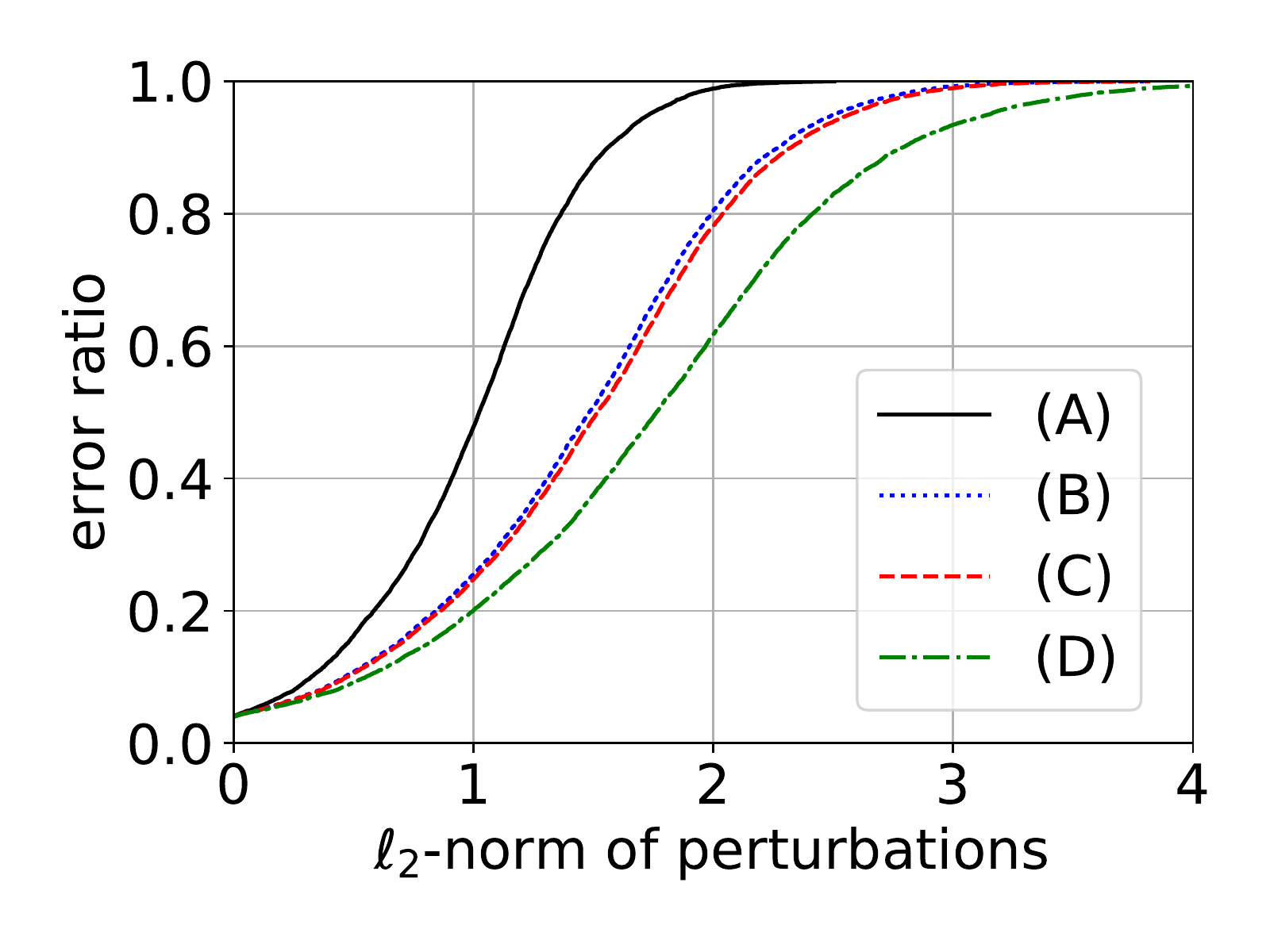}
        \end{center}
      \end{minipage}
      \caption{
        Comparison of error bounds using inequalities~\eqref{eq:ineq} with estimation.
        Each label corresponds to the value in inequality~\eqref{eq:ineq}.
        Left: naive model, Center: AT model, Right: LMT model.
      }
      \label{fig:compare-robustness}
    \end{figure}

    \paragraph{Size of provably guarded area:}
    We discuss the size of the provably guarded area, which is practically more interesting than tightness.
    While our algorithm has clear advantages on computational costs and broad applicability over prior work, guarded areas that our algorithm ensured were non-trivially large.
    In a naive model, the median of the size of perturbations we could certify invariance was $0.012$.
    This means changing several pixels by one in usual $0$--$255$ scale cannot change their prediction.
    Even though this result is not so tight as seen in the previous paragraph, this is significantly larger than prior computationally cheap algorithm proposed by \citet{LowerBounds}.
    The more impressive result was obtained in models trained with LMT, and the median of the guarded area was $1.02$.
    This corresponds to $0.036$ in the $\ell_\infty$ norm.
    \citet{OuterPolytope}, which used the same network and hyperparameters as ours, reported that they could defend from perturbations with its $\ell_\infty$-norm bounded by $0.1$ for more than $94\%$ examples.
    Thus, in the $\ell_\infty$-norm, our work is inferior, if we ignore their limited applicability and massive computational demands.
    However, our algorithm mainly targets the $\ell_2$-norm, and in that sense, the guarded area is significantly larger.
    Moreover, for more than half of the test data, we could ensure that there are no one-pixel attacks \citep{OnePixel}.
    To confirm the non-triviality of the obtained certification, we have some examples of provably guarded images in Figure~\ref{fig:perturb}.
        \begin{figure}[t]
      \begin{minipage}{.075\hsize}
        \begin{center}
          \includegraphics[width=\linewidth]{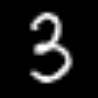}
        \end{center}
      \end{minipage}
      \begin{minipage}{.01\hsize}
        \begin{center}
        \end{center}
      \end{minipage}
      \begin{minipage}{.075\hsize}
        \begin{center}
          \includegraphics[width=\linewidth]{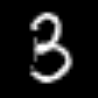}
        \end{center}
      \end{minipage}
      \begin{minipage}{.02\hsize}
        \begin{center}
        \end{center}
      \end{minipage}
      \begin{minipage}{.075\hsize}
        \begin{center}
          \includegraphics[width=\linewidth]{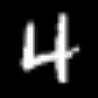}
        \end{center}
      \end{minipage}
      \begin{minipage}{.01\hsize}
        \begin{center}
        \end{center}
      \end{minipage}
      \begin{minipage}{.075\hsize}
        \begin{center}
          \includegraphics[width=\linewidth]{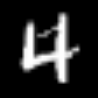}
        \end{center}
      \end{minipage}
            \begin{minipage}{.02\hsize}
        \begin{center}
        \end{center}
      \end{minipage}
      \begin{minipage}{.075\hsize}
        \begin{center}
          \includegraphics[width=\linewidth]{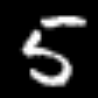}
        \end{center}
      \end{minipage}
      \begin{minipage}{.01\hsize}
        \begin{center}
        \end{center}
      \end{minipage}
      \begin{minipage}{.075\hsize}
        \begin{center}
          \includegraphics[width=\linewidth]{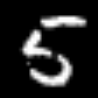}
        \end{center}
      \end{minipage}
            \begin{minipage}{.02\hsize}
        \begin{center}
        \end{center}
      \end{minipage}
      \begin{minipage}{.075\hsize}
        \begin{center}
          \includegraphics[width=\linewidth]{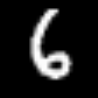}
        \end{center}
      \end{minipage}
      \begin{minipage}{.01\hsize}
        \begin{center}
        \end{center}
      \end{minipage}
      \begin{minipage}{.075\hsize}
        \begin{center}
          \includegraphics[width=\linewidth]{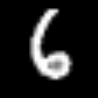}
        \end{center}
      \end{minipage}
            \begin{minipage}{.02\hsize}
        \begin{center}
        \end{center}
      \end{minipage}
      \begin{minipage}{.075\hsize}
        \begin{center}
          \includegraphics[width=\linewidth]{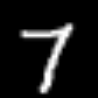}
        \end{center}
      \end{minipage}
      \begin{minipage}{.01\hsize}
        \begin{center}
        \end{center}
      \end{minipage}
      \begin{minipage}{.075\hsize}
        \begin{center}
          \includegraphics[width=\linewidth]{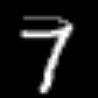}
        \end{center}
      \end{minipage}
      \caption{
        Examples of pairs of an original image and an artificially perturbed image
        which LMT model was ensured not to make wrong predictions.
        The differences between images are large and visually perceptible.
        On the basis of Proposition~\ref{prop2},
        any patterns of perturbations with the same or smaller magnitudes could not  deceive the network trained with LMT.
      }
      \label{fig:perturb}
    \end{figure}

    \subsection{Scalability test}
    \label{subsec:large_network}
    We evaluated our method with a larger and more complex network to confirm its broad applicability and scalability.
    We used $16$-layered wide residual networks \citep{WideResNet} with width factor $4$ on the SVHN dataset \citep{SVHN} following \citet{Parseval}.
    To the best of our knowledge, this is the largest network concerned with certification.
    We compared LMT with a naive counterpart, which uses weight decay, spectral norm regularization \citep{SpectralNorm}, and Parseval networks.

    \paragraph{Size of provably guarded area:}
    For a model trained with LMT, we could ensure larger guarded areas than $0.029$ for more than half of test data.
    This order of certification was only provided for small networks in prior work.
    In models trained with other methods, we could not provide such strong certification.
    There are mainly two differences between LMT and other methods.
    First, LMT enlarges prediction margins.
    Second, LMT regularizes batch-normalization layers, while in other methods, batch-normalization layers cancel the regularization on weight matrices and kernel of convolutional layers.
    We also conducted additional experiments to provide further certification for the network.
    First, we replaced convolution with kernel size 1 and stride 2 with average-pooling with size 2 and convolution with kernel size 1.
    Then, we used LMT with $c=0.1$.
    As a result, while the accuracy dropped to $86\%$, the median size of the provably guarded areas was larger than $0.08$.
    This corresponds to that changing $400$ elements of input by $\pm 1$ in usual image scales ($0$--$255$) cannot cause error over $50\%$ for the trained network.
    These certifications are non-trivial, and to the best of our knowledge, these are the best certification provided for this large network.

    \paragraph{Robustness against attack:}
    We evaluated the robustness of trained networks against adversarial perturbations created by the current attacks.
    We used C\&W attack~\citep{CW} with 100 iterations and no random restart for evaluation.
    Table~\ref{table:accuracy} summarizes the results.
    While LMT slightly dropped its accuracy, it largely improved robustness compared to other regularization based techniques.
    Since these techniques are independent of other techniques such as adversarial training or input transformations, further robustness will be expected when LMT is combined with them.

    \begin{table}
        \caption{
        Accuracy of trained wide residual networks on SVHN against C\&W  attack.
        }
        \label{table:accuracy}
        \begin{center}
            \begin{tabular}{ c c c c c c }
                \toprule
                &  & \multicolumn{3}{c}{\textbf{Size of perturbations}} \\
                \cline{3-5}
                & \textbf{Clean} & $\mathbf{0.2}$ & $\mathbf{0.5}$ & $\mathbf{1.0}$\\
                \midrule
                weight decay & $\mathbf{98.31}$ & $72.38$ & $20.98$ & $2.02$ \\
                Parseval network & $98.30$ & $71.35$ & $17.92$ & $0.94$ \\
                spectral norm regularization & $98.27$ & $73.66$ & $20.35$ & $1.39$ \\
                \textbf{LMT} & $96.38$ & $\mathbf{86.90}$ & $\mathbf{55.11}$ & $\mathbf{17.69}$ \\
                \bottomrule
            \end{tabular}
        \end{center}
    \end{table}

    \section{Conclusion}
    \label{sec:conclusion}
    To ensure perturbation invariance of a broad range of networks with a computationally efficient procedure,
    we achieved the following.
    \begin{enumerate}
        \setlength{\parskip}{0.02in}
        \setlength{\parsep}{0.02in}
        \setlength{\itemsep}{0.02in}
        \item We offered general and tighter spectral bounds for each component of neural networks.
        \item We introduced general and fast calculation algorithm for the upper bound of operator norms and its differentiable approximation.
        \item We proposed a training algorithm which effectively constrains networks to be smooth, and achieves better certification and robustness against attacks.
        \item We successfully provided non-trivial certification for small to large networks with negligible computational costs.
    \end{enumerate}
    We believe that this work will serve as an essential step towards both certifiable and robust deep learning models.
    Applying developed techniques to other Lipschitz-concerned domains such as training of GAN or training with noisy labels is future work.

    \clearpage
    
    \section*{Acknowledgement}
      Authors appreciate Takeru Miyato for valuable feedback.
      YT was supported by Toyota/Dwango AI scholarship.
      IS was supported by KAKENHI 17H04693.
      MS was supported by KAKENHI 17H00757.

    \bibliography{nips2018}
    \bibliographystyle{plainnat}

    \clearpage
    \appendix
    
  \section{Proof of Proposition~\ref{prop1}}
  \label{sec:appendix-prop1}
  We prove Prop.~\ref{prop1} in Sec.~\ref{subsec:absence}.
  Let us consider a classifier with Lipschitz constant $L$.
  Let $F(X)$ be an output vector of the classifier for a data point $X$.

  The statement to prove is the following:
  \begin{align}
    F(X)_{t_X} - \max_{i\neq t_X} F(X)_i \geq \sqrt{2}L\|\epsilon\|_2
    \Rightarrow F(X + \epsilon)_{t_X} - \max_{i\neq t_X} F(X + \epsilon)_i \geq 0 \label{eq:statement}.
  \end{align}
  If we prove the following, it suffices:
  \begin{align}
    F(X + \epsilon)_{t_X} - \max_{i\neq t_X} F(X + \epsilon)_i
    \geq F(X)_{t_X} - \max_{i\neq t_X} F(X)_i - \sqrt{2}L\|\epsilon\|_2. \label{eq:sufficient-condition}
  \end{align}

  Before proving inequality (\ref{eq:sufficient-condition}), we have the following lemma.
  \begin{lemma}
    For real vectors $x$ and $y$, the following inequality holds:
    \[
      \lvert\max_{i\neq t_X} x_i - \max_{i\neq t_X} y_i\rvert
      \leq \max_{i\neq t_X} \lvert x_i - y_i\rvert.
    \]
  \end{lemma}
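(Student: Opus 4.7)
My plan is to reduce the absolute value to two one-sided inequalities by a symmetry argument, and then to prove each one-sided inequality by comparing the maximum on one side with the corresponding coordinate on the other side.

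More concretely, let $i^\star \in \arg\max_{i \neq t_X} x_i$ and $j^\star \in \arg\max_{i \neq t_X} y_i$, so that $\max_{i\neq t_X} x_i = x_{i^\star}$ and $\max_{i\neq t_X} y_i = y_{j^\star}$. To bound $\max_{i\neq t_X} x_i - \max_{i\neq t_X} y_i$ from above, I would use the fact that $y_{j^\star} \geq y_{i^\star}$ by definition of $j^\star$, which yields
\begin{equation}
x_{i^\star} - y_{j^\star} \leq x_{i^\star} - y_{i^\star} \leq |x_{i^\star} - y_{i^\star}| \leq \max_{i \neq t_X} |x_i - y_i|. \nonumber
\end{equation}
Swapping the roles of $x$ and $y$ (using that $x_{i^\star} \geq x_{j^\star}$) gives the analogous bound on $\max_{i\neq t_X} y_i - \max_{i\neq t_X} x_i$. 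Combining the two bounds yields the claim, since the right-hand side is symmetric in $x$ and $y$.

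This argument is essentially the standard proof that the maximum is $1$-Lipschitz as a function of its arguments in the $\ell_\infty$ sense, restricted to indices different from $t_X$. There is no real obstacle: the only subtle point is that one must not try to argue coordinate-by-coordinate with the same index, but rather play off the maximizing index of one side against the value at that same index on the other side. Once this trick is in place the rest is immediate, and no properties of $t_X$ beyond its role as an excluded index are used, so the lemma holds for any fixed index (or indeed any common restriction of the index set).
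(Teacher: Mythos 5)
Your proof is correct and follows essentially the same route as the paper's: both bound the difference of maxima by evaluating $y$ at the maximizing index of $x$ (the paper via a w.l.o.g.\ reduction, you via two explicit one-sided bounds combined by symmetry), using the key step $x_{i^\star} - \max_{i\neq t_X} y_i \leq x_{i^\star} - y_{i^\star} \leq \max_{i\neq t_X}\lvert x_i - y_i\rvert$.
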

  \begin{proof}
    W.l.o.g. we assume $\max_{i\neq t_X} x_i \geq \max_{i\neq t_X} y_i$.
    Let $j$ be $\argmax_{i\neq{t_X}}x_i$. Then,
    \begin{align*}
      \lvert\max_{i\neq t_X} x_i - \max_{i\neq t_X} y_i\rvert
      &= \max_{i\neq t_X} x_i - \max_{i\neq t_X} y_i \\
      &= x_j - \max_{i\neq t_X} y_i \\
      &\leq x_j - y_j \\
      &\leq \max_{i\neq t_X}\lvert x_i - y_i \rvert \hfill
    \end{align*}
  \end{proof}

  Now, we can prove the inequality (\ref{eq:sufficient-condition}).
  \begin{align}
    & F(X + \epsilon)_{t_X} - \max_{i\neq t_X} F(X + \epsilon)_i \nonumber \\
    = & F(X)_{t_X} - \max_{i\neq t_X} F(X)_i + \left(F(X+\epsilon)_{t_X} - F(X)_{t_X}\right)
    - \left(\max_{i\neq t_X}F(X+\epsilon)_i - \max_{i\neq t_X}F(X)_{i}\right) \nonumber \\
    \geq & F(X)_{t_X} - \max_{i\neq t_X} F(X)_i - \lvert F(X+\epsilon)_{t_X} - F(X)_{t_X}\rvert
    - \lvert\max_{i\neq t_X}F(X+\epsilon)_i - \max_{i\neq t_X}F(X)_{i}\rvert \nonumber \\
    \geq & F(X)_{t_X} - \max_{i\neq t_X} F(X)_i - \lvert F(X+\epsilon)_{t_X} - F(X)_{t_X}\rvert
    - \max_{i\neq t_X}\lvert F(X+\epsilon)_i - F(X)_{i}\rvert \nonumber \\
    \geq & F(X)_{t_X} - \max_{i\neq t_X} F(X)_i - \max_{a_1,a_2\in \mathbb{R}}\left\{\lvert a_{1}\rvert + \lvert a_2 \rvert \middle| \sqrt{a_1^2 + a_2^2} \leq L \|\epsilon\|_2 \right\} \nonumber \\
    = & F(X)_{t_X} - \max_{i\neq t_X} F(X)_i - \sqrt{2} L\|\epsilon\|_2. \nonumber
  \end{align}
  \qed

  \section{Lipschitz constant of basic functionals}
    \label{sec:appendix-functional}
    We prove bounds described in \Sec~\ref{subsubsec:composition}.
    Let $f$ and $g$ be functions with their Lipschitz constants
    bounded with $L_1$ and $L_2$, respectively.

    \subsection{Composition of functions}
      \label{subsubsec:appendix-composition}
      \begin{align*}
        \frac{\|f(g(X_1)) - f(g(X_2))\|_2}{\|X_1 - X_0\|_2}
        &=\frac{\|f(g(X_1)) - f(g(X_2))\|_2}{\|g(X_1) - g(X_2)\|_2}
        \cdot\frac{\|g(X_1) - g(X_2)\|_2}{\|X_1 - X_0\|_2} \\
        &\leq L_1\cdot L_2.
      \end{align*}

    \subsection{Addition of functions}
      \label{subsubsec:appendix-addition}
      Using triangle inequality,
      \begin{align*}
        \frac{\|(f + g)(X_1) - (f + g)(X_2))\|_2}{\|X_1 - X_0\|_2}
        &\leq\frac{\|f(X_1) - f(X_2))\|_2 + \|g(X_1) - g(X_2)\|_2}{\|X_1 - X_0\|_2} \\
        &\leq L_1 + L_2.
      \end{align*}

    \subsection{Concatenation of functions}
      \label{subsubsec:appendix-concatenation}
      \begin{align*}
          \frac{\|(f(X_1), g(X_1)) - (f(X_0), g(X_0))\|_2}{\|X_1 - X_0\|_2}
          &=\sqrt{\frac{\|(f(X_1), g(X_1)) - (f(X_0), g(X_0))\|_2^2}{\|X_1 - X_0\|_2^2}} \\
          &=\sqrt{\frac{\|f(X_1) - f(X_0)\|_2^2 + \|g(X_1) - g(X_0)\|_2^2}{\|X_1 - X_0\|_2^2}} \\
          &\leq\sqrt{L_1^2 + L_2^2}.
      \end{align*}

  \section{Lipschitz constant of linear components}
    \label{sec:appendix-linear}
    We see the Lipschitz constant of linear components, given in Sec.~\ref{subsubsec:layers}, in more detail.
    We first prove Theorem~\ref{thm-itergrad}, and Theorem~\ref{thm-iteration}.
    Next, we focus on its calculation for normalization layers.

    \subsection{Proof of Theorem~\ref{thm-itergrad}}
      \label{subsec:appendix-thm-itergrad}
      Since there exists a matrix representation $M$ of $\phi$ and the operator norm of $\phi$ in terms of $\ell_2$-norm is equivalent to the spectral norm of $M$, considering $v = Mu$ is sufficient.
      Now, we have
      \begin{align*}
        \frac{1}{2}\frac{\partial \|v\|_2^2}{\partial u}
        &= \frac{1}{2}\frac{\partial (u^{\top}M^{\top}Mu)}{\partial u} \\
        &= M^{\top}Mu.
      \end{align*}
      Thus, recursive application of the algorithm in Theorem~\ref{thm-itergrad} is equivalent to the power iteration to $M^{\top}M$.
      Since the maximum eigen value of $M^{\top}M$ is a square of the spectral norm of $M$,  $u$ converges to the square of the spectral norm of $M$ almost surely in the algorithm.
      \qed

    \subsection{Explanation of Algorithm~\ref{alg:power}}
      We use the same notation with Algorithm~\ref{alg:power}.
      In Algorithm~\ref{alg:power}, we only care the direction of the vector $u$ because we
      normalize it at every iteration.
      We first explain that the direction of $u$
      converges to a singular vector of the largest singular value of the linear function $f$
      when $f$ is fixed.\\
      Since
      \begin{align*}
        \frac{\partial L}{\partial u}
        &= \frac{\partial L}{\partial \sigma}\cdot\frac{\partial \sigma}{\partial u} \\
        &= 2\frac{\partial L}{\partial \sigma}\cdot\frac{\partial \sigma}{\partial \sigma^2}\cdot\left(\frac{1}{2}\frac{\partial \sigma^2}{\partial u}\right)
      \end{align*}
      and $2\frac{\partial L}{\partial \sigma}\cdot\frac{\partial \sigma}{\partial \sigma^2}$ is a scalar,
      $u$ converges to the same direction with Theorem~\ref{thm-itergrad}.
      In other words, $u$ converges to the singular vector of the largest singular of $M$.

      If $u$ approximates the singular vector, then $\sigma := \|f(u)\|_2$ approximates the spectral norm of $f$.
      If $f$ changes a little per iteration, even though Algorithm~\ref{alg:power} performs only one step of the power iteration per iteration, we can keep good approximation of the spectral norm~\citep{SpectralNorm}.

    \subsection{Proof of Theorem~\ref{thm-iteration}}
      \label{subsec:appendix-thm-iteration}
      From the proof of Theorem~\ref{thm-itergrad} in Appendix~\ref{subsec:appendix-thm-itergrad}, we considers power iteration to $M^{\top}M$.
      Let $\lambda_1$ be the largest singular value of a matrix $M^{\top}M$.
      Since $M^{\top}M$ is a symmetric positive definite matrix, from Theorem 1.1 in ~\citet{ErrorBound}, we have
      \[\lambda_1 - R_k \leq \frac{1}{2}\left(\Delta_k + \sqrt{\Delta_k\left(4R_k + \Delta_k\right)}\right),\]
      where $\Delta_k$ is bounded by $\omega - 1$ from Prop. 2.2 in \citep{ErrorBound}.
      A quantity $\omega$ has the following relationship \citep{ErrorBound}:
      \[\mathrm{Pr}(\omega - 1 \geq n) \leq \sqrt{2/\pi}.\]
      Thus, the Theorem~\ref{thm-iteration} holds.
      \qed

    If we use batchsize $128$ for the algorithm and take the max of all upper bound, then the failure probability is less than $\left(2/\pi\right)^{128/2} \leq 10^{-12}$.

    \subsection{Calculation of normalization layers}
      \label{subsec:appendix-noramlization}

    \subsubsection{Example: batch-normalization}
      \label{subsubsec:appendix-normalization}
      Batch normalization applies the following function,
      \begin{equation}
          \label{eq:batch_update}
          x_i \leftarrow \gamma_i\frac{x_i - \mu_i}{\sqrt{\sigma_i^2 + \epsilon}} + \beta_i,
      \end{equation}
      where $\gamma_i$ and $\beta_i$ are learnable parameters and $\mu_i, \sigma_i$ are the mean and deviation of (mini) batch, respectively.
      Parameters and variables $\gamma_i, \beta_i, \mu_i,$ and $\sigma_i$ are constant at the inference time.
      Small constant $\epsilon$ is generally added for numerical stability.
      We can rewrite an update of \eqref{eq:batch_update} as follows:
      \begin{equation}
          x_i \leftarrow \frac{\gamma_i}{\sqrt{\sigma_i^2 + \epsilon}}x_i + \left(-\gamma_i\frac{\mu_i}{\sqrt{\sigma_i^2 + \epsilon}} + \beta_i\right). \nonumber
      \end{equation}
      Since the second term is constant in terms of input, it is independent of the Lipschitz constant.
      Thus, we consider the following update:
      \begin{equation}
          x_i \leftarrow \frac{\gamma_i}{\sqrt{\sigma_i^2 + \epsilon}}x_i. \nonumber
      \end{equation}
      The Lipschitz constant can be bounded by $\underset{i}{\max} \{|\gamma_i|/\sqrt{\sigma_i^2 + \epsilon}\}$.
      \\
      Since the opertion is linear, we can also use Algorithm~\ref{alg:power} for the calculation.
      This allows us to calculate the Lipschitz constant of batch-noramlization and precedent other linear layers jointly.
      When we apply the algorithm~\ref{alg:power} to a single batch-normalization layer,
      a numerical issue can offer. See Appendix~\ref{subsec:numerical-issue} for more details.

    \subsubsection{Other normalizations}
      In weight normalization~\citep{WeightNorm}, the same discussion applies if we replace $\sqrt{\sigma_i^2+\epsilon}$ in batch-normalization with $\|w_i\|_2$, where $w_i$ is the $i$-th row of a weight matrix.

    \subsubsection{Undesired convergence of power iteration}
      \label{subsec:numerical-issue}
      In some cases, estimation of spectral norm using power iteration can fail in training dynamics.
      For example, in batch-normalization layer, $u$ in Algorithm~\ref{alg:power} converges to some one-hot vector.
      Once $u$ converges, no matter how much other parameters change during training, $u$ stay the same.
      To avoid the problem, when we apply Algorithm~\ref{alg:power} to normalization layers, we added small perturbations on a vector $u$ in the algorithm at every iteration after its normalization.

  \section{Lipschitz constant of pooling and activation}
    \label{sec:appendix-pooling}

    \subsection{Proof of Theorem~\ref{thm-pooling}}
      \label{subsec:appendix-thm-pooling}
      First, we prove the following lemma:

      \begin{lemma}
          Let vector $X$ be a concatenation of vectors ${X}_\lambda (0\leq \lambda \leq n)$ and
          let $f$ be a function such that $f(x)$ is a concatenations of vectors $f(X_\lambda)$,
          where each $f_\lambda$ is a function with its Lipschitz constant bounded by $L$.
          Then, the Lipschitz constant of $f$ can be bounded by $L$.
      \end{lemma}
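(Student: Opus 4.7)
The plan is to reduce the bound on $f$'s Lipschitz constant to a block-wise application of the hypothesis, exploiting the fact that concatenation corresponds to a disjoint (orthogonal) decomposition of the output space. This is the same observation that underlies the concatenation bound $\sqrt{L_1^2+L_2^2}$ from Section~\ref{subsubsec:composition}, specialized to the case where all blocks share a common Lipschitz bound $L$ and act on disjoint input coordinates.

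Concretely, I would take arbitrary inputs $X$ and $Y$ and partition them as $X=(X_1,\dots,X_n)$ and $Y=(Y_1,\dots,Y_n)$ consistent with the assumed concatenation structure. The first step is to expand the squared output distance using the definition of concatenation:
\begin{equation}
\|f(X)-f(Y)\|_2^2 \;=\; \sum_{\lambda=1}^{n} \|f_\lambda(X_\lambda)-f_\lambda(Y_\lambda)\|_2^2. \nonumber
\end{equation}
The second step is to apply the Lipschitz hypothesis $\|f_\lambda\|_2\le L$ to each summand, bounding each term by $L^2\|X_\lambda-Y_\lambda\|_2^2$. The third step is to recognize that the input blocks are also disjoint in the sense of concatenation, so $\sum_\lambda \|X_\lambda-Y_\lambda\|_2^2 = \|X-Y\|_2^2$. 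Combining these and taking a square root gives $\|f(X)-f(Y)\|_2 \le L\|X-Y\|_2$, which is the claim.

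There is essentially no difficulty here; the only point that warrants a short comment is why the factor $\sqrt{n}$ of Theorem~\ref{thm-pooling} is absent. The reason is precisely that in this lemma the $X_\lambda$ partition the coordinates of $X$ (each coordinate of $X$ belongs to exactly one block), so squared norms add exactly with no overcounting. In Theorem~\ref{thm-pooling}, by contrast, the $Z^\lambda$ may share coordinates of $Z$, and each coordinate can appear in up to $n$ blocks; the inequality $\sum_\lambda \|Z^\lambda_{\text{diff}}\|_2^2 \le n\|Z_{\text{diff}}\|_2^2$ is what produces the extra $\sqrt{n}$. So this lemma will serve as the $n=1$ base case used in proving Theorem~\ref{thm-pooling} after one accounts for the overlap multiplicity.
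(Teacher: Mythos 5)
Your proof is correct and is essentially identical to the paper's: both expand $\|f(X)-f(Y)\|_2^2$ as the block-wise sum $\sum_\lambda \|f_\lambda(X_\lambda)-f_\lambda(Y_\lambda)\|_2^2$, bound each summand by $L^2\|X_\lambda-Y_\lambda\|_2^2$, and use that the disjoint blocks make $\sum_\lambda\|X_\lambda-Y_\lambda\|_2^2=\|X-Y\|_2^2$. Your added remark on how the overlap multiplicity produces the $\sqrt{n}$ in Theorem~\ref{thm-pooling} accurately reflects how the paper subsequently combines this lemma with the concatenation bound.
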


      \begin{align}
          \frac{\|f(X) - f(Y)\|_2^2}{\|X - Y\|_2^2}
          = & \frac{\underset{\lambda}{\sum} \|f_\lambda(X_\lambda) - f_\lambda(Y_\lambda)\|_2^2}{\|X - Y\|_2^2} \nonumber \\
          \leq & \frac{\underset{\lambda}{\sum} L^2\|X_\lambda - Y_\lambda\|_2^2}{\|X - Y\|_2^2} \nonumber \\
          = & L^2 \nonumber
      \end{align}
      \qed

      Since n-th time repetition is the same with n-th time concatenation, which is explained in Appendix~\ref{subsubsec:appendix-concatenation}, its Lipschitz constant is bounded by $\sqrt{n}$.
      Using Appendix~\ref{subsubsec:appendix-concatenation} and the above lemma, we obtain the bound in Theorem~\ref{thm-pooling}.
      \qed

    \subsection{Proof of Corollary~\ref{conv_compare}}
      \label{subsec:appendix-conv_compare}
      \paragraph{notation:}
      $ch_{in}, ch_{out}, h_{k}, w_{k}$: input channel size, output channel size, kernel height, kernel width.\\
      $W'$: a matrix which kernel of a convolution $W\in R^{ch_{out}\times ch_{in} \times h_k \times w_k}$ is reshaped into the size $ch_{out} \times (ch_{in} \times h_k \times w_k)$.

      \paragraph{proof:}
      The operation in a convolution layer satisfies the assumption in Theorem~\ref{thm-pooling}, where all $f_i$ are the matrix multiplication of $W'$. Thus, the right inequality holds.
      Since matrix multiplication with $W'$ is applied at least once in the convolution, the left inequality holds.
      \qed
      
      This result is similar to \citet{Parseval}, but we can provide better bounds by carefully calculating the number of repetition, given in Appendix~\ref{subsec:appendix-n-rep}.

    \subsection{Tighter bound of n-repetition in Theorem~\ref{thm-pooling}}
      \label{subsec:appendix-n-rep}
      We provide tight number of the repetition for pooling and convolutional layers here.

      \paragraph{notation:}
      $h_{in}, w_{in}$: height and width of input array.\\
      $h_{k}, w_{k}$: kernel height, kernel width.\\
      $h_s$, $w_s$: stride height, stride width.\\

      \paragraph{number of repetition:}
      \begin{equation}
          \left\lceil \frac{\min(h_k, h_{in} - h_k + 1)}{h_s} \right\rceil
          \cdot
          \left\lceil \frac{\min(w_k, w_{in} - w_k + 1)}{w_s} \right\rceil. \nonumber
      \end{equation}

      \paragraph{derivation:}
      First of all, the repetition is bounded by the size of reception field, which is $h_k \cdot w_k$.
      This is provided by \citet{Parseval}.
      Now, we extend the bound by considering the input size and stride.
      Firstly, we consider the input size after padding.
      If both the input and kernel size are $8 \times 8$, the number of repetition is obviously bounded by $1$.
      Similarly, the number of repetition can be bounded by the following:
      \begin{equation}
          \min(h_k, h_{in} - h_k + 1)\cdot\min(w_k, w_{in} - w_k + 1). \nonumber
      \end{equation}
      We can further bound the time of repetition by considering the stride as follows:
      \begin{equation}
          \left\lceil \frac{\min(h_k, h_{in} - h_k + 1)}{h_s} \right\rceil
          \cdot
          \left\lceil \frac{\min(w_k, w_{in} - w_k + 1)}{w_s} \right\rceil. \nonumber
      \end{equation}
      \qed

    \subsubsection{The Lipschitz constant of $f_i$ in Theorem~\ref{thm-pooling} for Pooling layers}
      \label{subsubsec:appendix-pooling}

      \paragraph{max-pooling:}
      Lipschitz constant of max function is bounded by one.

      \paragraph{average-pooling:}
      Before bounding the Lipschitz constant, we note that the following inequality holds for a vector $X$:
      \begin{equation}
        \left(\underset{i=1}{\overset{n}{\sum}}X_{i}\right)^2 \leq n\underset{i=1}{\overset{n}{\sum}}X_{i}^2. \nonumber
      \end{equation}
      This can be proved using
      \begin{equation}
          \frac{1}{n}\underset{i=1}{\overset{n}{\sum}}X_{i}^2 - \left(\frac{1}{n}\underset{i=1}{\overset{n}{\sum}}X_{i}\right)^2
          = \frac{1}{n}\underset{i=1}{\overset{n}{\sum}}\left(X_{i} - \frac{1}{n}\underset{i=1}{\overset{n}{\sum}}X_{i}\right)^2
          \geq 0. \nonumber
      \end{equation}
      Now, we bound the Lipschitz constant of the average function $\mathrm{Avg}(\cdot)$.
      \begin{align}
          \frac{\|\mathrm{Avg}(X) - \mathrm{Avg}(Y)\|_2}{\|X - Y\|_2}
          = & \frac{|\mathrm{Avg}(X - Y)|}{\|X - Y\|_2} \nonumber \\
          = & \frac{|\underset{i=1}{\overset{h_k \cdot w_k}{\sum}}(X_i - Y_i)|}{\|X - Y\|_2} / (h_k \cdot w_k) \nonumber \\
          \leq & \frac{1}{\sqrt{h_k \cdot w_k}}. \nonumber
      \end{align}

    \subsection{Activation functions}
      \label{subsec:appendix-activation}
      Table~\ref{table:lipschitz} lists up the Lipschitz constants of activation functions
      and other nonlinear functions commonly used in deep neural networks.
      From Theorem~\ref{thm-pooling}, we only need to consider the Lipschitz constants elementwisely.

      \begin{table}
          \caption{
          Lipschitz constants of major activation functions.
          }
          \label{table:lipschitz}
          \begin{center}
              \begin{tabular}{ c c }
                  \toprule
                  \textbf{Activation} & \textbf{Lipschitz constant}\\
                  \midrule
                  ReLU & $1$  \\
                  Leaky ReLU \citep{LeakyReLU} & $\max (1, |\alpha|)$  \\
                  sigmoid & $1/4$ \\
                  tanh & $1$ \\
                  soft plus \citep{SoftPlus} & $1$ \\
                  ELU \citep{ELU} & $\max (1, |\alpha|)$ \\
                  \bottomrule
              \end{tabular}
          \end{center}
      \end{table}

  \section{Lipschitz-Margin Training stabilization}
  \label{appendix-stabilize}
  We empirically found that applying the addition only when a prediction is correct stabilizes the training.
  Thus, in the training, we scale the addition with
  \begin{equation}
      \alpha_{F, X} := \min_{i\neq t_X}\left\{\max\left(0, \min\left(1, \frac{F(X)_{t_X} - F(X)_i}{\sqrt{2}cL_{F}}\right)\right)\right\}. \nonumber
  \end{equation}
  Even though $\alpha_{F, X}$ depends on $L_{F}$, we do not back-propagate it.

  Similarly, we observed that strong regularization at initial stage of training can make training unstable.
  Thus, we set an initial value of $c$ small and linearly increased it to the target value in first $5$ epochs as learning rate scheduling used in \citet{OneHour}.

  \section{Experimental setups}
  \label{sec:appendix-experiment}
  In this section, we describe the details of our experimental settings.

  \subsection{Experiment~\ref{subsec:exp_tightness}}
  \label{subsec:appendix-tightness}
  
  \subsubsection{Base network}

  We used the same network, optimizer and hyperparameters with \citet{OuterPolytope}.
  A network consisting of two convolutional and two fully-connected layers was used.
  Table~\ref{table:polytope-net} shows the details of its structure.

  \begin{table}
      \caption{
      Network structure used for experiment~\ref{subsec:exp_tightness}.
      For convolutional layers, output size denotes channel size of output.
      }
      \label{table:polytope-net}
      \begin{center}
          \begin{tabular}{ c c c c c }
              \toprule
              & output size & kernel & padding & stride \\
              \midrule
              convolution     & 16  & (4,4) & (1,1) & (2,2) \\
              ReLU            & -   & -     &  -    & - \\
              convolution     & 32  & (4,4) & (1,1) & (2,2) \\
              ReLU            & -   & -     &  -    & - \\
              fully-connected & 100 & -     &  -    & - \\
              ReLU            & -   & -     &  -    & - \\
              fully-connected & 10  & -     &  -    & - \\
              \bottomrule
          \end{tabular}
      \end{center}
  \end{table}
  
  \subsubsection{Hyperprameters}

  All models were trained using Adam optimizer \citep{Adam} for $20$ epochs with a batch size of $50$.
  The learning rate of Adam was set to $0.001$.
  Note that these setting is the same with ~\citet{OuterPolytope}.
  For a LMT model, we set $c=1$.
  For an AT model, we tuned hyperparemter $c$ of C\&W attack from $[0.0001, 0.001, 0.01, 0.1, 1, 10, 100, 1000, 10000]$ and chose the best one on validation data.

	\subsubsection{Estimation of inequality~\eqref{eq:ineq}}
    \label{subsubsec:appendix-ineq}

    \paragraph{(A):}
    We calculated (A) with Proposition~\ref{prop2}.
    
    \paragraph{(B):}
    We took the max of the local Lipschitz constant calculated for (C).
    
     \paragraph{(C):}
     First, we added a random perturbation which each element is sampled from a Gaussian with zero-mean and variance $v$ , where $v$ is set as a reciprocal number of the size of input dimension.
     Next, we calculated the size of a gradient with respect to the input.
     We repeated the above two for 100 times and used the maximum value between them as an estimation of  the local Lipschitz constant.
     
     \paragraph{(D):}
     We used DeepFool~\citep{DeepFool}.

    \subsection{Experiment~\ref{subsec:large_network}}
    \label{subsec:appendix-large}

    Wide residual network \citep{WideResNet} with 16 layers and a width factor $k=4$ was used.
    We sampled 10000 images from an extra data available for SVHN dataset as validation data and combined the rest with the official training data, following \citet{Parseval}.
    All inputs were preprocessed so that each element has a value in a range $0$-$1$.

    Models were trained with Nesterov Momentum \citep{Nesterov} for $160$ epochs with a batch size of $128$.
    The initial learning rate was set to $0.01$ and it was multiplied by $0.1$ at epochs $80$ and $120$.
    For naive models, the weight decay with $\lambda=0.0005$ and the dropout with a dropout ratio of $0.4$ were used.
    For Parseval networks, the weight decay was removed except for the last fully-connected layer and Parseval regularization with $\beta=0.0001$ was added, following \citet{Parseval}.
    For a network with the spectral norm regularization, the weight decay was removed and the spectral norm regularization with $\lambda=0.01$ was used following \citet{SpectralNorm}.
    We note that both \citet{Parseval} and \citet{SpectralNorm} used batch-normalization for their experimental evaluations and thus, we left it for them.
    For LMT, we used $c=0.01$ and did not apply weight decay.
    In residual blocks, the Lipschits constant for the convolutional layer and the batch normalization layer was jointly calculated as described in \Sec~\ref{subsubsec:layers}.

  \section{Additional discussion}
  \label{sec:appendix-additional-discussion}

    \subsection{Application}
      \label{subsec:application}
      Since the proposed calculation method of guarded areas imposes almost no computational overhead at inference time, this property has various potential applications.
      First of all, we note that in real-world applications, even though true labels are not available, we can calculate the lower bounds on the size of perturbations needed to change the predictions.
      The primary use is balancing between the computational costs and the performance.
      When the provably guarded areas are sufficiently large, we can use weak and computationally cheap detectors of perturbations, because the detectors only need to find large perturbations.
      For data with small guarded areas, we may resort to computationally heavy options, e.g., strong detectors or denoising networks.

    \subsection{Improvements from Parseval networks}
      \label{subsec:difference}
      Here, we discuss the difference between our work and \citet{Parseval}.
      In the formulation of Parseval networks, the goal is to limit the change in some Lipschitz continuous loss by constraining the Lipschitz constant.
      However, since the existence of adversarial perturbations corresponds to the $0$-$1$ loss, which is not continuous, their discussion is not applicable.
      For example, if we add a scaling layer to the output of a network without changing its parameters, we can control the Lipschitz constant of the network.
      However, this does not change its prediction and this is irrelevant to the existence of adversarial perturbations.
      Therefore, considering solely the Lipschitz constant can be insufficient.
      In LMT, the insufficiency is avoided using Proposition~\ref{prop1} and ~\ref{prop2}.

      Additionally, we point out three differences.
      First, in Parseval networks, the upper bound of each component is restricted to be smaller than one.
      This makes their theoretical framework incompatible with some frequently used layers such as the batch normalization layer.
      Since they just ignore the effects of such layers, Parseval networks cannot control the Lipschitz constant of networks with normalization layers.
      On the other hand, our calculation method of guarded area and LMT can handle such layers without problems.
      Second, Parseval networks force all singular values of the weight matrices to be close to one, meaning that Parseval networks prohibit weight matrices to dump unnecessary features.
      As \citet{Theoretical} pointed out, learning unnecessary features can be a cause of adversarial perturbations, which indicates the orthonormality condition has adverse effects that encourage the existence of adversarial perturbations.
      Since LMT does not penalize small singular values, LMT does not suffer the problem.
      Third, LMT requires only differentiable bounds of the Lipschitz constants.
      This lets LMT be easily extended to networks with various components.
      On the other hand, the framework of Parseval networks requires special optimization techniques for each component.

    \subsection{Extensions of LMT}
      \label{subsec:extensions}
      The formulation of LMT is highly flexible, so we can consider some extended versions.
      First, we consider the applications that require guarded area different in classes.
      For example, to distinguish humans from cats will be more important than to classify Angora cats from Persian cats.
      In LMT, such knowledge can be combined by specifying different hyperparameter $c$ for each pair of classes.
      Second, we consider a combination of adversarial trainings.
      It will be more reasonable to require smaller margins for the inputs with large perturbations.
      In LMT, we can incorporate this intuition by changing $c$ according to the size of perturbations or merely set $c$ to zero for perturbed data.
      This ability of LMT to be easily combined with other notions is one of the advantages of LMT.

\end{document}